\newtheorem{theorem}{Theorem}
\newtheorem{lemma}{Lemma}
\newtheorem{remark}{Remark}
\newtheorem{definition}{Definition}
\newtheorem{proposition}{Proposition}
\theoremstyle{definition}
\newtheorem{experiment}{Experiment}
\newcommand{\beq}{\begin{equation}}
\newcommand{\eeq}{\end{equation}}
\newcommand{\be}{\begin{equation}}
\newcommand{\ee}{\end{equation}}
\newcommand{\beqa}{\begin{eqnarray}}
\newcommand{\eeqa}{\end{eqnarray}}
\newcommand{\bean}{\begin{eqnarray*}}
\newcommand{\eean}{\end{eqnarray*}}
\newcommand{\cM}{{\mathcal M}}
\newcommand{\R}{\mathbb{R}}
\def\to{\rightarrow}
\newcommand{\x}{\mathbf{x}}
\newcommand{\z}{\mathbf{z}}
\newcommand{\uu}{\mathbf{u}}
\newcommand{\kk}{\mathbf{k}}
\newcommand{\bl}{\mathbf{l}}
\newcommand{\y}{\mathbf{y}}
\def \1{\mathds{1}}
\newcommand{\sub}[2]{#1^{(#2)}}
\icmltitlerunning{On the Spectral Bias of Neural Networks}
\begin{document}

\twocolumn[
\icmltitle{On the Spectral Bias of Neural Networks}



\icmlsetsymbol{equal}{*}

\begin{icmlauthorlist}
\icmlauthor{Nasim Rahaman}{equal,mila,ial}
\icmlauthor{Aristide Baratin}{equal,mila}
\icmlauthor{Devansh Arpit}{mila}
\icmlauthor{Felix Draxler}{ial}
\icmlauthor{Min Lin}{mila}
\icmlauthor{Fred A.~Hamprecht}{ial}
\icmlauthor{Yoshua Bengio}{mila}
\icmlauthor{Aaron Courville}{mila}
\end{icmlauthorlist}

\icmlaffiliation{mila}{Mila, Quebec, Canada}
\icmlaffiliation{ial}{Image Analysis and Learning Lab, Ruprecht-Karls-Universit\"at Heidelberg, Germany}

\icmlcorrespondingauthor{Nasim Rahaman}{nasim.rahaman@live.com}
\icmlcorrespondingauthor{Aristide Baratin}{aristide.baratin@umontreal.ca}
\icmlcorrespondingauthor{Devansh Arpit}{devansharpit@gmail.com}

\icmlkeywords{Machine Learning, ICML}

\vskip 0.3in
]



\printAffiliationsAndNotice{\icmlEqualContribution} 

\newcommand{\fix}{\marginpar{FIX}}
\newcommand{\new}{\marginpar{NEW}}
\begin{abstract}
Neural networks are known to be a class of highly expressive functions  able to fit even random input-output mappings with $100\%$ accuracy. 
In this work we present properties of neural networks that complement this aspect of expressivity. By using tools from Fourier analysis, we highlight a learning bias of deep networks towards low frequency functions -- i.e. functions that vary globally without local fluctuations -- which  manifests itself as a frequency-dependent learning speed. Intuitively, this property is in line with the observation that over-parameterized networks prioritize learning simple patterns that generalize across data samples. We also investigate the role of the shape of the data manifold by presenting empirical and theoretical evidence that, somewhat counter-intuitively, learning higher frequencies gets \emph{easier} with increasing manifold complexity.

\end{abstract}
\section{Introduction}

The remarkable success of deep neural networks at generalizing to natural data 
is at odds with the traditional notions of model complexity and their empirically demonstrated ability to fit arbitrary random data to perfect accuracy \citep{understanding_DL, arpit2017closer}. This has prompted recent investigations of possible implicit regularization mechanisms inherent in the learning process which induce a bias towards low complexity solutions \citep{neyshabur2014search, soudry2017implicit, poggio2018theory, neyshabur2017exploring}.

In this work, we take a slightly shifted view on implicit regularization by suggesting that 
low-complexity functions are \emph{learned faster} during training by gradient descent. 
We expose this bias by taking a closer look at neural networks through the lens of Fourier analysis. 
While they can approximate arbitrary functions, we find that these networks prioritize learning the low frequency modes, 
a phenomenon we call the \emph{spectral bias}. 
This bias manifests itself not just in the process of learning, but also in the parameterization of the model itself: 
in fact, we show that the lower frequency components of trained networks are more robust to random parameter perturbations. Finally, we also expose and analyze the rather intricate interplay between the spectral bias and the geometry of the data manifold by showing that high frequencies get easier to learn when the data lies on a lower-dimensional manifold of complex shape embedded in the input space of the model. 
We focus the discussion on networks with rectified linear unit (ReLU) activations, whose continuous piece-wise linear structure enables an analytic treatment. 

\subsection*{Contributions\footnote{Code: \hyperlink{https://github.com/nasimrahaman/SpectralBias}{https://github.com/nasimrahaman/SpectralBias}}}

\begin{enumerate}

\item We exploit the continuous piecewise-linear structure of ReLU networks to evaluate 
its Fourier spectrum (Section \ref{sec:asymptotics}). 
\item We find empirical evidence of a \emph{spectral bias}: i.e. lower frequencies are learned first. We also show that lower frequencies are more robust to random perturbations of the network parameters (Section \ref{sec:lowfreqfirst}).  
\item We study the role of the shape of the data manifold: we  show how complex manifold shapes can facilitate the learning of higher frequencies and  develop a theoretical understanding of this behavior
(Section \ref{sec:notallmfds}).

\end{enumerate}

\section{Fourier analysis of ReLU networks} \label{sec:asymptotics}

\subsection{Preliminaries}

Throughout the paper we call `ReLU network' a scalar function $f: \R^d  \mapsto \R$ defined by a neural network with $L$ hidden layers of widths  $d_1, \cdots d_L$ 
and a single output neuron: 
\beq \label{reluDNN2}
f(\x) = \left(T^{(L+1)} \circ \sigma \circ T^{(L)} \circ \cdots \circ \sigma \circ T^{(1)}\right)(\x)
\eeq
where each $\sub{T}{k}: \R^{d_{k-1}} \to \R^{d_k}$ is an affine function 
($d_0 = d$ and $d_{L+1} = 1$) and $\sigma(\uu)_i = \max(0, u_i)$ denotes the ReLU activation function acting elementwise on a vector $\uu = (u_1, \cdots u_n)$. In the standard basis, $\sub{T}{k}(\x) = W^{(k)} \x + \mathbf{b}^{(k)} $ for some weight matrix $W^{(k)}$  and bias vector $\mathbf{b}^{(k)}$. 

ReLU networks are known to be continuous piece-wise linear (CPWL) functions, where the linear regions are convex polytopes  \citep{raghu2016expressive, montufar2014number, zhang2018tropical, arora2018understanding}.  
Remarkably, the converse also holds: 
every CPWL function can be represented by a ReLU network \citep[Theorem 2.1]{arora2018understanding}, which in turn endows ReLU networks with universal approximation properties. 
Given the ReLU network $f$ from Eqn.~\ref{reluDNN2}, we  
can make the piecewise linearity explicit by writing,
\beq \label{eqn:cpwlrelu}
f(\x) = \sum_{\epsilon} 1_{P_{\epsilon}}(\x) \, (W_{\epsilon} \x + \mathbf{b}_\epsilon) 
\eeq
where $\epsilon$ is an index for the linear regions $P_\epsilon$ 
and $1_{P_\epsilon}$ is the indicator function on $P_\epsilon$. As shown in Appendix \ref{app:cpwlrelu} in more detail, each region corresponds to an \emph{activation pattern}\footnote{We adopt the terminology of \citet{raghu2016expressive, montufar2014number}.} of all hidden neurons of the network, which is a binary vector with components conditioned on the sign of the input of the respective neuron.  The $1 \times d$ matrix $W_{\epsilon}$ is given by
\beq 
W_\epsilon = W^{(L+1)} W^{(L)}_{\epsilon} \cdots W^{(1)}_{\epsilon} 
\eeq
where $W^{(k)}_{\epsilon}$ is obtained from the original weight $W^{(k)}$ by setting its $j^{th}$ column to zero whenever the neuron $j$ of the $k^{th}$ layer is inactive. 

\subsection{Fourier Spectrum} 

In the following, we study the structure of ReLU networks in terms of their Fourier representation,  $f(\x) := (2\pi)^{\nicefrac{d}{2}} \int \tilde{f}(\kk) \, e^{i\kk\cdot \x} \mathbf{dk}$, where $\tilde{f}(\kk) := \int f(\x)\, e^{- i \kk\cdot \x} \mathbf{dx}$ is the Fourier transform\footnote{Note that general ReLU networks need not be squared integrable: for instance, the class of two-layer ReLU networks represent an arrangement of hyperplanes \citep{montufar2014number} and hence grow linearly as $x \to \infty$. In such cases, the Fourier transform is to be understood in the sense of tempered distributions acting on rapidly decaying smooth functions $\phi$ as $\langle \tilde{f}, \phi \rangle = \langle f, \tilde{\phi} \rangle$. See Appendix \ref{app:ftrelu} for a formal treatment.
}. Lemmas \ref{FTRelu} and \ref{lemma:ftpolyrat} 
yield the explicit form of the Fourier components (we refer to Appendix \ref{app:ftrelu} for the proofs and technical details). 

\begin{lemma} \label{FTRelu} The Fourier transform of ReLU networks decomposes as,
\beq \label{FTReluform} 
\tilde{f}(\mathbf{k}) = i \sum_{\epsilon} \frac{W_\epsilon \mathbf{k}}{k^2} 
\,\tilde{1}_{P_{\epsilon}}(\mathbf{k})
\eeq
where $k=\|\mathbf{k}\|$ and $\tilde{1}_{P}(\mathbf{k}) = \int_{P} e^{-i\mathbf{k}\cdot \mathbf{x}} \mathbf{dx}$ is the Fourier transform of the indicator function of $P$. 
\end{lemma}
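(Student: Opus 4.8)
The plan is to avoid integrating the region-wise expression $f(\mathbf{x}) = \sum_\epsilon 1_{P_\epsilon}(\mathbf{x})(W_\epsilon\mathbf{x} + \mathbf{b}_\epsilon)$ against $e^{-i\mathbf{k}\cdot\mathbf{x}}$ directly — the bias terms and the $\mathbf{x}$-weighted integrals make that bookkeeping awkward and do not obviously collapse to the stated form — and instead to work with the \emph{gradient} of $f$, where the affine structure reduces to something piecewise constant. The crucial observation is that a ReLU network is globally continuous: adjacent affine pieces agree on the shared faces of their polytopes. Consequently the distributional gradient carries no singular surface contributions supported on the region boundaries and equals the classical, piecewise-constant field
\[
\nabla f(\mathbf{x}) = \sum_\epsilon 1_{P_\epsilon}(\mathbf{x})\, W_\epsilon^{\mathsf T},
\]
the biases $\mathbf{b}_\epsilon$ having dropped out under differentiation. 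Establishing this identity carefully — that continuity annihilates the jump terms one would otherwise pick up from differentiating the indicators — is the first step.

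Second, I would take Fourier transforms of both sides. On the left, the standard derivative rule $\widetilde{\partial_j f}(\mathbf{k}) = i k_j\, \tilde f(\mathbf{k})$ gives $\widetilde{\nabla f}(\mathbf{k}) = i\mathbf{k}\,\tilde f(\mathbf{k})$; on the right, linearity gives $\sum_\epsilon W_\epsilon^{\mathsf T}\,\tilde 1_{P_\epsilon}(\mathbf{k})$. Contracting both sides with $\mathbf{k}$ turns the left-hand side into $i\,\|\mathbf{k}\|^2\,\tilde f(\mathbf{k}) = i k^2\,\tilde f(\mathbf{k})$ and the right-hand side into $\sum_\epsilon (W_\epsilon\mathbf{k})\,\tilde 1_{P_\epsilon}(\mathbf{k})$, since $\mathbf{k}\cdot W_\epsilon^{\mathsf T} = W_\epsilon\mathbf{k}$ is a scalar. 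Dividing through by $i k^2$ for $\mathbf{k}\neq 0$ then yields the claimed form $\tilde f(\mathbf{k}) = i\sum_\epsilon \frac{W_\epsilon\mathbf{k}}{k^2}\,\tilde 1_{P_\epsilon}(\mathbf{k})$ up to the overall sign, which is fixed by the Fourier convention adopted in the text.

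The main obstacle is analytic rather than algebraic: a generic ReLU network is not integrable (two-layer networks already grow linearly at infinity), so every transform here must be read in the tempered-distribution sense, and each unbounded polytope $P_\epsilon$ contributes a $\tilde 1_{P_\epsilon}$ that is itself only a tempered distribution. I would therefore justify both the derivative rule and the contraction with $\mathbf{k}$ by pairing against Schwartz test functions, where integration by parts is licit and the would-be boundary-at-infinity terms vanish. The remaining delicate point is the division by $k^2$ near the origin: the identity $i k^2\,\tilde f = \sum_\epsilon (W_\epsilon\mathbf{k})\,\tilde 1_{P_\epsilon}$ only determines $\tilde f$ up to distributions supported at $\mathbf{k}=0$, which correspond precisely to the global affine part of $f$ that a single-frequency analysis cannot resolve. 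I expect the statement to be read as an equality of functions away from the origin, with the $k=0$ locus absorbed into the distributional interpretation flagged in the text, and I would relegate the careful treatment of that singularity to the appendix.
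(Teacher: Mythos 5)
Your proposal is correct and follows essentially the same route as the paper's own proof: differentiate to reduce $f$ to its piecewise-constant gradient $\sum_\epsilon W_\epsilon^{\mathsf T} 1_{P_\epsilon}$, apply the Fourier derivative rule (via Stokes/integration by parts in the compactly supported case and via tempered distributions in general), contract with $\mathbf{k}$, and divide by $k^2$. Your added remarks --- that continuity kills the singular surface terms in the distributional gradient, and that the identity only pins down $\tilde f$ up to distributions supported at $\mathbf{k}=0$ --- are points the paper's Appendix~\ref{app:ftrelu} treats more tersely, but the argument is the same.
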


The Fourier transform of the indicator over linear regions appearing in Eqn.~\ref{FTReluform} are fairly intricate mathematical objects. \citet{diaz2016fourier} develop an elegant procedure for evaluating it in arbitrary dimensions via a recursive application of Stokes theorem. We describe this procedure in detail\footnote{We also  generalize the construction to tempered distributions.} in Appendix \ref{app:ftpolytope}, and present here its main corollary.

\begin{lemma} \label{lemma:ftpolyrat}
Let $P$ be a full dimensional polytope in $\mathbb{R}^d$. Its Fourier spectrum takes the form: 
\beq 
\tilde{1}_P(\mathbf{k}) = \sum_{n=0}^{d} \frac{D_n (\kk) 1_{G_n}(\kk)}{k^{n}}
\eeq
where $G_n$ 
is the union of $n$-dimensional subspaces that are orthogonal to some $n$-codimensional face of $P$,  
$D_n: \R^d \to \mathbb{C}$ is in  $\Theta(1)\,(k \to \infty)$  and $1_{G_n}$ the indicator over $G_n$. 
\end{lemma}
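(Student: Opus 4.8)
The plan is to compute $\tilde{1}_P(\mathbf{k}) = \int_P e^{-i\mathbf{k}\cdot\mathbf{x}}\,\mathbf{dx}$ by repeated application of the divergence theorem, following the recursive procedure of \citet{diaz2016fourier}. The starting observation is that, for fixed $\mathbf{k}\neq 0$, the integrand is an exact divergence of a constant-coefficient vector field, since $\tfrac{i\mathbf{k}}{k^2}\cdot(-i\mathbf{k}) = 1$ gives $e^{-i\mathbf{k}\cdot\mathbf{x}} = \nabla\cdot\big(\tfrac{i\mathbf{k}}{k^2}e^{-i\mathbf{k}\cdot\mathbf{x}}\big)$. Applying the divergence theorem turns the volume integral over $P$ into a sum over its facets $F$ (codimension-$1$ faces), each carrying an outward unit normal $\mathbf{n}_F$ that is constant along $F$:
\[
\tilde{1}_P(\mathbf{k}) = \sum_{F}\frac{i(\mathbf{k}\cdot\mathbf{n}_F)}{k^2}\int_F e^{-i\mathbf{k}\cdot\mathbf{x}}\,dS .
\]
Each prefactor is homogeneous of degree $-1$ in $\mathbf{k}$ and of order $\Theta(1/k)$, so one power of $k^{-1}$ is gained per reduction in dimension.

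First I would set up the recursion descending through the face lattice. On a face $Q$ of dimension $m$ whose affine hull is directed by the subspace $V_Q$, only the orthogonal projection $\mathbf{k}_Q$ of $\mathbf{k}$ onto $V_Q$ enters the intrinsic integral: fixing $\mathbf{x}_0\in Q$ and writing $\mathbf{x} = \mathbf{x}_0 + \mathbf{y}$ with $\mathbf{y}\in V_Q$ gives $\int_Q e^{-i\mathbf{k}\cdot\mathbf{x}}\,dS = e^{-i\mathbf{k}\cdot\mathbf{x}_0}\int_Q e^{-i\mathbf{k}_Q\cdot\mathbf{y}}\,d^m y$. Whenever $\mathbf{k}_Q\neq 0$, I would again write the integrand as an intrinsic divergence within $V_Q$ using the field $\tfrac{i\mathbf{k}_Q}{\|\mathbf{k}_Q\|^2}$ and apply the divergence theorem inside $V_Q$, expressing the integral over $Q$ as a sum over the facets $F'\subset\partial Q$ with factor $\tfrac{i(\mathbf{k}_Q\cdot\mathbf{n}_{F'})}{\|\mathbf{k}_Q\|^2}$. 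Each such factor is again homogeneous of degree $-1$ and $\Theta(1/k)$, and each step lowers the face dimension by one.

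A given branch terminates at the first face $Q$ where the projected wavevector vanishes, $\mathbf{k}_Q = 0$, equivalently where $\mathbf{k}$ is orthogonal to $V_Q$; there the exponential is constant on $Q$ and the integral collapses to $e^{-i\mathbf{k}\cdot\mathbf{x}_0}\,\mathrm{vol}_m(Q)$ (vertices, for which $V_Q=\{0\}$, are the generic terminal faces). A branch reaching a terminal face of codimension $n$ has accumulated exactly $n$ prefactors of degree $-1$, producing an overall $k^{-n}$ scaling whose numerator — a product of direction cosines $\tfrac{i(\hat{\mathbf{k}}_Q\cdot\mathbf{n}_{F'})}{\|\hat{\mathbf{k}}_Q\|^2}$ times a phase $e^{-i\mathbf{k}\cdot\mathbf{x}_0}$ and a volume — is homogeneous of degree $0$, hence bounded and non-decaying, i.e. $\Theta(1)$. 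Since $\mathbf{k}_Q=0$ forces $\mathbf{k}$ into the $n$-dimensional orthogonal complement $V_Q^\perp$ of a codimension-$n$ face, grouping the terminal contributions by codimension $n$ and taking the union of the subspaces $V_Q^\perp$ over all codimension-$n$ faces yields exactly the support set $G_n$ and its indicator $1_{G_n}$; collecting the associated degree-$0$ numerators into $D_n(\mathbf{k})$ gives the claimed decomposition.

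The main obstacle is that the termination condition $\mathbf{k}_Q=0$ holds only on the measure-zero sets $G_n$ (for $n<d$; note $G_d=\R^d$), on which the denominators $\|\mathbf{k}_Q\|^2$ in the descent become singular: as $\mathbf{k}$ approaches $V_Q^\perp$ the would-be deeper reductions blow up, and it is precisely this breakdown that promotes a $k^{-d}$ vertex term into a slower-decaying $k^{-n}$ face term on $G_n$. Turning the formal recursion into a clean identity therefore requires (i) careful bookkeeping that each branch has a well-defined first codimension of orthogonality and that the stratified sum is independent of the order in which facets are processed, and (ii) interpreting the lower-dimensional terms $D_n 1_{G_n}/k^n$ in the sense of tempered distributions, matching the distributional reading of $\tilde{1}_P$ flagged earlier. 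I would handle (i) by induction on dimension over the face lattice, and defer the analytic care in (ii) to the Stokes-theorem construction of \citet{diaz2016fourier}, whose generalization to tempered distributions is carried out in Appendix \ref{app:ftpolytope}.
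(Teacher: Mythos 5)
Your proposal is correct and follows essentially the same route as the paper: the recursive divergence/Stokes-theorem descent through the face lattice of \citet{diaz2016fourier}, with each reduction contributing a degree $-1$ prefactor, branches terminating at the first face to which $\mathbf{k}$ is orthogonal, and the terminal contributions grouped by codimension $n$ to produce $G_n$ and the $\Theta(1)$ numerators $D_n$. The subtleties you flag (well-definedness of the stratified sum and the tempered-distribution reading near the singular sets $G_n$) are exactly the points the paper defers to the construction in Appendix \ref{app:ftpolytope}.
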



Lemmas \ref{FTRelu}, \ref{lemma:ftpolyrat} 
together yield the main result of this section. 

\begin{theorem} 
\label{theorem:spectraldecay}
The Fourier components of the ReLU network $f_{\theta}$ 
with parameters $\theta$ is given by the rational function:
\beq \label{FTbound}
\tilde{f}_{\theta}(\kk) = \sum_{n=0}^{d} \frac{C_n(\theta, \kk) 1_{H^\theta_n}(\kk)}{k^{n + 1}}
\eeq
where 
$H^\theta_n$ is the union of $n$-dimensional subspaces that are orthogonal to some $n$-codimensional faces of some polytope $P_{\epsilon}$ and $C_n(\cdot, \theta): \R^{d} \to \mathbb{C}$ is
$\Theta(1)\,(k \to \infty)$. 
\end{theorem}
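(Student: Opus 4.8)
The plan is to obtain Eqn.~\ref{FTbound} by substituting Lemma~\ref{lemma:ftpolyrat} into Lemma~\ref{FTRelu} and then regrouping the resulting terms according to their power of $k$. Since $f_\theta$ is a ReLU network with finitely many layers, the activation patterns $\epsilon$ — and hence the linear regions $P_\epsilon$ — are finite in number, so every sum over $\epsilon$ below is finite. This finiteness is what will ultimately let me carry the $\Theta(1)$ estimates through the manipulations.

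First I would apply Lemma~\ref{lemma:ftpolyrat} to each polytope $P_\epsilon$ separately, writing
\beq
\tilde 1_{P_\epsilon}(\kk) = \sum_{n=0}^{d} \frac{D^\epsilon_n(\kk)\, 1_{G^\epsilon_n}(\kk)}{k^n},
\eeq
where $G^\epsilon_n$ is the union of $n$-dimensional subspaces orthogonal to the $n$-codimensional faces of $P_\epsilon$ and each $D^\epsilon_n = \Theta(1)$. Substituting this into Eqn.~\ref{FTReluform} and exposing the scale of the scalar $W_\epsilon \kk$ by writing $W_\epsilon\kk = k\,(W_\epsilon \hat\kk)$ with $\hat\kk = \kk/k$, the prefactor $k/k^2$ collapses to $1/k$, giving
\beq
\tilde f_\theta(\kk) = i \sum_{\epsilon} \sum_{n=0}^{d} \frac{(W_\epsilon \hat\kk)\, D^\epsilon_n(\kk)\, 1_{G^\epsilon_n}(\kk)}{k^{n+1}}.
\eeq
Next I would collect terms by their common power $k^{-(n+1)}$. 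Setting $H^\theta_n := \bigcup_\epsilon G^\epsilon_n$ — which is exactly the union of $n$-dimensional subspaces orthogonal to the $n$-codimensional faces of \emph{some} region $P_\epsilon$, as asserted — I define
\beq
C_n(\theta,\kk) := i \sum_{\epsilon\,:\,\kk \in G^\epsilon_n} (W_\epsilon \hat\kk)\, D^\epsilon_n(\kk),
\eeq
so that $C_n(\theta,\kk)\,1_{H^\theta_n}(\kk)$ equals the inner $\epsilon$-sum at fixed $n$, and Eqn.~\ref{FTbound} follows. It then remains to check the asymptotic order: $W_\epsilon \hat\kk$ is a fixed covector evaluated on a unit vector and is therefore bounded uniformly in $k$, while each $D^\epsilon_n = \Theta(1)$ by Lemma~\ref{lemma:ftpolyrat}; a finite sum of products of bounded and $\Theta(1)$ factors is again $\Theta(1)$, hence $C_n(\theta,\cdot) = \Theta(1)\ (k\to\infty)$.

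The step I expect to require the most care is the regrouping in the presence of overlapping supports: at a given frequency $\kk$ several regions may share a face orientation, so $\kk$ can lie in $G^\epsilon_n$ for more than one $\epsilon$ and, on lower-dimensional intersections, for more than one $n$ at once. The clean form of Eqn.~\ref{FTbound} relies on the fact that contributions with different $n$ carry genuinely different powers of $k$ and so never mix; only same-$n$ contributions are pooled, and defining $C_n$ as a restricted sum over $\{\epsilon : \kk \in G^\epsilon_n\}$ absorbs this pooling while keeping the number of summands finite. A secondary point worth flagging is that $\Theta(1)$ here should be read as an order-in-$k$ statement — the coefficients retain their dependence on the direction $\hat\kk$ and on $\theta$ — so that possible angular cancellations among the finitely many region contributions do not raise the order in $k$, which is all the theorem claims.
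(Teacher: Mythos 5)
Your proposal is correct and follows essentially the same route as the paper's own argument in Appendix~\ref{app:moarspectraldecay}: substitute Lemma~\ref{lemma:ftpolyrat} into Lemma~\ref{FTRelu}, swap the finite sums over $\epsilon$ and $n$, set $H^\theta_n = \bigcup_\epsilon G^\epsilon_n$, and identify $C_n$ with the pooled same-$n$ contributions $\sum_\epsilon W_\epsilon D^\epsilon_n 1_{G^\epsilon_n}$, which is $\Theta(1)$ as a finite sum of bounded $\Theta(1)$ factors. Your extra remarks on overlapping supports and on reading $\Theta(1)$ as an order-in-$k$ statement only make explicit what the paper leaves implicit.
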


Note that Eqn~\ref{FTbound} applies to general ReLU networks with arbitrary width and depth\footnote{Symmetries that might arise due to additional assumptions can be used to further develop Eqn~\ref{FTbound}, see e.g. \citet{eldan2016power} for 2-layer networks.}.

\textbf{Discussion.} We make the following two observations. First, the spectral decay of ReLU networks is highly anisotropic in large dimensions. In almost all directions of $\R^d$, we have a $k^{-d-1}$ decay. However, the decay can be as slow as $k^{-2}$ in specific directions 
orthogonal to the $d-1$ dimensional faces bounding the linear regions\footnote{Note that such a rate is \emph{not} guaranteed by piecewise smoothness alone. For instance, the function $\sqrt{|x|}$ is continuous and smooth everywhere except at $x = 0$, yet it decays as $k^{-1.5}$ in the Fourier domain.}. 

Second,  the numerator in Eqn~\ref{FTbound} is bounded by $N_f L_f$ (cf. Appendix~\ref{app:moarspectraldecay}), where $N_f$ is the number of linear regions and $L_f = \max_{\epsilon} \|W_{\epsilon}\|$ is the Lipschitz constant of the network.
Further, the Lipschitz constant $L_f$ can be bounded as (cf. Appendix~\ref{App:Lipschitzbound}): 
\beq
L_f \le \prod_{k = 1}^{L+1} \|W^{(k)}\| \le \|\theta\|_{\infty}^{L+1} \sqrt{d} \prod_{k=1}^{L} d_k 
\eeq
where $\|\cdot\|$ is the spectral norm and $\|\cdot\|_{\infty}$ the max norm, and $d_k$ is the number of units in the $k$-th layer. This makes the bound on $L_f$ scale exponentially in depth and polynomial in width. As for the number $N_f$ of linear regions, \citet{montufar2014number} and \citet{raghu2016expressive} obtain tight bounds that exhibit the same scaling behaviour \citep[Theorem 1]{raghu2016expressive}. In Appendix~\ref{app:arch_abl}, we qualitatively ablate over the depth and width of the network to expose how this reflects on the Fourier spectrum of the network. 

\begin{figure*}[t]
\begin{subfigure}[t]{0.48\textwidth}
\centering
\includegraphics[width=0.48\linewidth,trim=0.in 0.1in 0.1in 0.1in,clip]{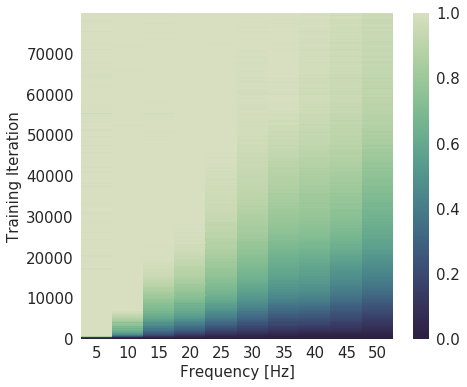} \,
\includegraphics[width=0.48\linewidth,trim=0.in 0.1in 0.1in 0.1in,clip]{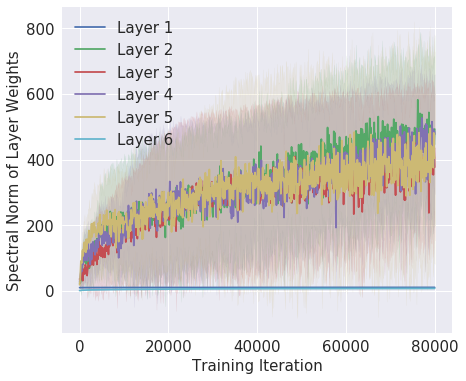}
\caption{\small Equal Amplitudes}
\end{subfigure}
\hfill
\begin{subfigure}[t]{0.48\textwidth}
\centering
\includegraphics[width=0.48\linewidth,trim=0.in 0.1in 0.1in 0.1in,clip]{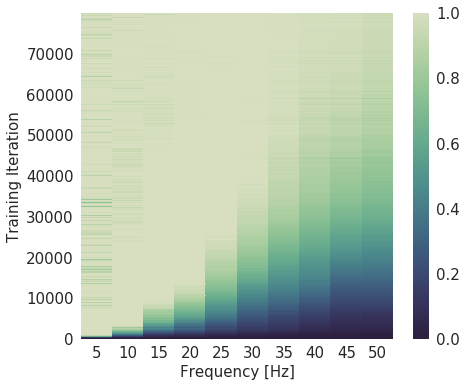} \,
\includegraphics[width=0.48\linewidth,trim=0.in 0.1in 0.1in 0.1in,clip]{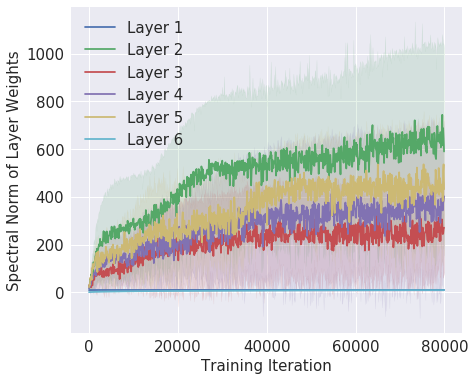}
\caption{\small Increasing Amplitudes}
\end{subfigure}
\caption{\small Left (a, b): Evolution of the spectrum (x-axis for frequency) during training (y-axis). The colors show the measured amplitude of the network spectrum at the corresponding frequency, normalized by the target amplitude at the same frequency (i.e. $|\tilde{f}_{k_i}|/A_i$) and the colorbar is clipped between 0 and 1. Right (a, b): Evolution of the spectral norm (y-axis) of each layer during training (x-axis). Figure-set (a) shows the setting where all frequency components in the target function have the same amplitude, and (b) where higher frequencies have larger amplitudes. \textbf{Gist}: We find that even when higher frequencies have larger amplitudes, the model prioritizes learning lower frequencies first. We also find that the spectral norm of weights increases as the model fits higher frequency, which is what we expect from Theorem \ref{theorem:spectraldecay}.} \label{fig:lowfreqfirst}
\end{figure*}

\section{Lower Frequencies are Learned First} \label{sec:lowfreqfirst}
We now present experiments 
showing  that networks tend to fit \emph{lower frequencies first} during training. We refer to this phenomenon as the \emph{spectral bias}, and discuss it in light of the results of
Section \ref{sec:asymptotics}. 

\begin{figure*}
\begin{subfigure}[t]{0.24\textwidth}
\centering
\includegraphics[width=0.99\textwidth]{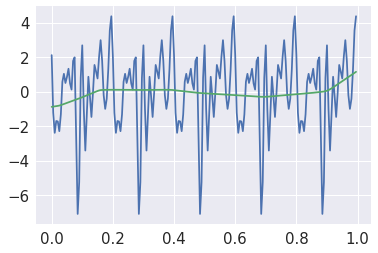}
\caption{\small Iteration 100}
\end{subfigure}
\hfill
\begin{subfigure}[t]{0.24\textwidth}
\centering
\includegraphics[width=0.99\textwidth]{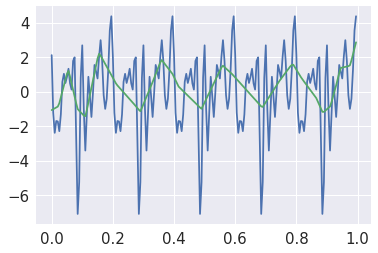}
\caption{\small Iteration 1000}
\end{subfigure}
\hfill
\begin{subfigure}[t]{0.24\textwidth}
\centering
\includegraphics[width=0.99\textwidth]{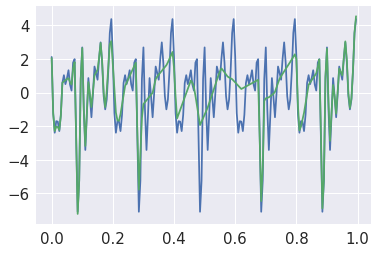}
\caption{\small Iteration 10000}
\end{subfigure}
\hfill
\begin{subfigure}[t]{0.24\textwidth}
\centering
\includegraphics[width=0.99\textwidth]{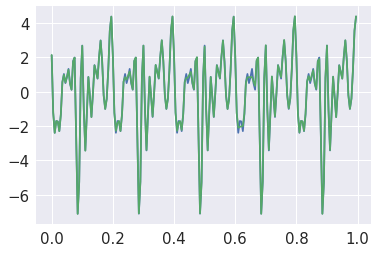}
\caption{\small Iteration 80000}
\end{subfigure}
\caption{\small The learnt function (green) overlayed on the target function (blue) as the training progresses. The target function is a superposition of sinusoids of frequencies $\kappa = (5, 10, ..., 45, 50)$, equal amplitudes and randomly sampled phases.} \label{fig:learntfuncs_eq_amps}
\end{figure*}

\subsection{Synthetic Experiments}

\begin{experiment} \label{experiment:lowfreqfirst} The setup is as follows\footnote{More experimental details and additional plots are provided in Appendix \ref{app:experiment:lowfreqfirst}.}: 
Given frequencies $\kappa = (k_1, k_2, ...)$ with corresponding amplitudes $\alpha = (A_1, A_2, ...)$, and phases $\phi = (\varphi_1, \varphi_2, ...)$, we consider the mapping $\lambda: [0, 1] \to \mathbb{R}$ given by
\beq  \label{target}
\lambda(z) = \sum_i A_i \sin(2\pi k_i z + \varphi_i).
\eeq
A 6-layer deep 256-unit wide ReLU network $f_\theta$ is trained to regress $\lambda$ with $\kappa = (5, 10, ..., 45, 50)$ and $N=200$ input samples spaced equally over $[0, 1]$; 
its spectrum $\tilde{f}_\theta(k)$ in expectation over $\varphi_i \sim U(0, 2 \pi)$ is monitored as training progresses. In the first setting, we set equal amplitude $A_i = 1$ for all frequencies and in the second setting, the amplitude increases from $A_1 = 0.1$ to $A_{10} = 1$. Figure~\ref{fig:lowfreqfirst} shows the normalized magnitudes $|\tilde{f}_\theta(k_i)|/ A_i$ at various frequencies, as training progresses with full-batch gradient descent. Further, Figure~\ref{fig:learntfuncs_eq_amps} shows the learned function at intermediate training iterations. The result is that 
lower frequencies (i.e. smaller $k_i$'s) are regressed first, regardless of their amplitudes. 
\end{experiment} 

\begin{experiment} \label{experiment:robustness}
\begin{figure}[!h]
\centering
\includegraphics[width=0.3\textwidth]{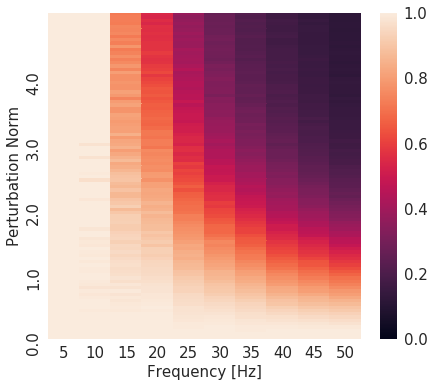}
\caption{\small Normalized spectrum of the model (x-axis for frequency, colorbar for magnitude) with perturbed parameters as a function of parameter perturbation (y-axis). The colormap is clipped between 0 and 1. We observe that the lower frequencies are more robust to parameter perturbations than the higher frequencies. \label{fig:robustness}}
\end{figure}
Our goal here is to illustrate a phenomenon that complements the one highlighted above: lower frequencies are more \emph{robust} to parameter perturbations. The set up is the same as in Experiment \ref{experiment:lowfreqfirst}. The network is trained to regress a target function with frequencies $\kappa = (10, 15, 20, ..., 45, 50)$ and amplitudes $A_i = 1 \, \forall \, i$. 
After convergence to $\theta^{*}$, we consider random (isotropic) perturbations $\theta = \theta^{*} + \delta \hat{\theta}$ of given magnitude $\delta$, 
where  $\hat{\theta}$ is a random unit vector in parameter space. We evaluate the network function $f_{\theta}$  at the perturbed parameters, and compute the magnitude of its discrete Fourier transform at frequencies $k_i$ to obtain $|\tilde{f}_{\theta}({k_i})|$. We also average over 100 samples of $\hat{\theta}$ to obtain $|\tilde{f}_{\mathbb{E}\theta}({k_i})|$, which we normalize by $|\tilde{f}_{\theta*}({k_i})|$. Finally, we average over the phases $\phi$ (see Eqn~\ref{target}). The result, shown in Figure \ref{fig:robustness}, demonstrates that higher frequencies are significantly less robust than the lower ones, guiding the intuition that expressing higher frequencies requires the parameters to be finely-tuned to work together. In other words, parameters that contribute towards expressing high-frequency components occupy a small volume in the parameter space. We formalize this  in Appendix \ref{app:volparamspace}. 
\end{experiment}

\begin{figure*}[t]
\centering
\begin{subfigure}[t]{0.245\textwidth}
\includegraphics[width=1\linewidth]{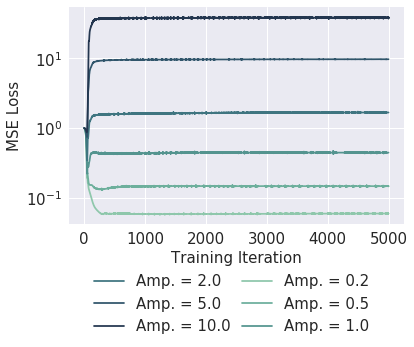}
\caption{\small $k = 0.1$ \label{fig:mnistnoise_a}}
\end{subfigure}
\begin{subfigure}[t]{0.245\textwidth}
\includegraphics[width=1\linewidth]{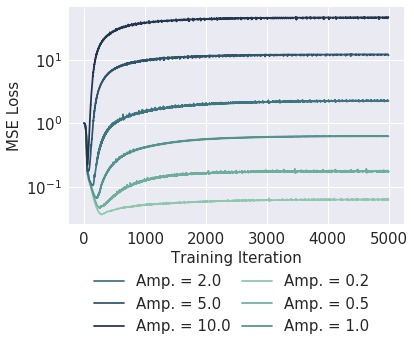}
\caption{\small $k = 1$ \label{fig:mnistnoise_b}}
\end{subfigure}
\begin{subfigure}[t]{0.245\textwidth}
\centering
\includegraphics[width=1\textwidth]{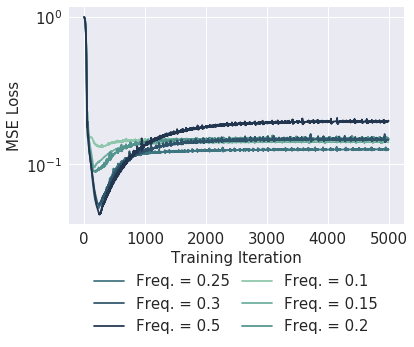}
\caption{\small $\beta = 0.5$ \label{fig:mnistnoise_c}}
\end{subfigure}
\begin{subfigure}[t]{0.245\textwidth}
\centering
\includegraphics[width=1\textwidth]{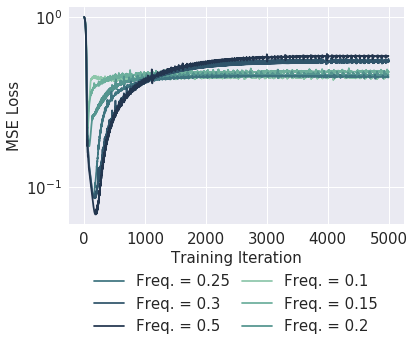}
\caption{\small $\beta = 1.$ \label{fig:mnistnoise_d}}
\end{subfigure}
\caption{\small (a,b,c,d): Validation curves for various settings of noise amplitude $\beta$ and frequency $k$. Corresponding training curves can be found in Figure~\ref{fig:mnistnoise_train} in appendix~\ref{app:lohifreqnoisemnist}.
\textbf{Gist}: Low frequency noise affects the network more than their high-frequency counterparts. Further, for high-frequency noise, one finds that the validation loss dips early in the training. Both these observations are explained by the fact that network readily fit lower frequencies, but learn higher frequencies later in the training.} \label{fig:mnistnoise}
\end{figure*}

{\bf Discussion .} Multiple theoretical aspects may underlie these observations. First, for a fixed architecture, recall that the numerator in Theorem \ref{theorem:spectraldecay} is\footnote{The tightness of this bound is verified empirically in appendix~\ref{app:arch_abl}.} $\mathcal{O}(L_f)$ (where $L_f$ is the Lipschitz constant of the function). However, $L_f$ is bounded by the parameter norm, which can only increase gradually during training by gradient descent. This leads to the higher frequencies being learned\footnote{This assumes that the Lipschitz constant of the (noisy) target function is larger than that of the network at initialization.} late in the optimization process. To confirm that the bound indeed increases as the model fits higher frequencies, we plot in Fig~\ref{fig:lowfreqfirst} the spectral norm of weights of each layer during training for both cases of constant and increasing amplitudes. 

Second (cf. Appendix~\ref{app:graddecay}), the exact form of the Fourier spectrum yields that for a fixed direction $\hat \kk$, the spectral decay rate of the parameter gradient $\nicefrac{\partial \tilde f}{\partial \theta}$ is at most one exponent  of $k$ lower than that of $\tilde{f}$. If for a fixed $\hat \kk$ we have $\tilde f = \mathcal{O}(k^{-\Delta - 1})$ where $1 \le \Delta \le d$, we obtain for the residual $h = f - \lambda$ and (continuous) training step $t$: 
\begin{align} \label{eq:residualdecay}
\left| \frac{d \tilde h(\kk)}{d t} \right| = \left| \frac{d\tilde f(\kk)}{d t} \right| = \underbrace{\left| \frac{d \tilde f(\kk)}{d \theta} \right|}_{ \mathcal{O}(k^{-\Delta})} \overbrace{\left| \frac{d \theta}{d t} \right|}^{\left| \eta \cdot \nicefrac{d \mathcal L}{d \theta}\right|} = \mathcal{O}(k^{-\Delta}) 
\end{align}
where we use the fact that $\nicefrac{d \theta}{d t}$ is just the learning rate times the parameter gradient of the loss which is independent\footnote{Note however that the loss term might involve a sum or an integral over all frequencies, but the summation is over a different variable.} of $k$, and assume that the target function $\lambda$ is fixed. Eqn~\ref{eq:residualdecay} shows that the rate of change of the residual decays with increasing frequency, which is what we find in Experiment~\ref{experiment:lowfreqfirst}.

\subsection{Real-Data Experiments}

While Experiments~\ref{experiment:lowfreqfirst} and \ref{experiment:robustness} establish the spectral bias by explicitly evaluating the Fourier coefficients, doing so becomes prohibitively expensive for larger $d$ (e.g. on MNIST). To tackle this, we propose the following set of experiments to measure the effect of spectral bias indirectly on MNIST.

\begin{experiment} \label{experiment:lohifreqnoisemnist}
In this experiment, we investigate how the validation performance dependent on the frequency of noise added to the training target. We find that the best validation performance on MNIST is particularly insensitive to the magnitude of high-frequency noise, yet it is adversely affected by low-frequency noise. We consider a target (binary) function $\tau_{0}: X \to \{0, 1\}$ defined on the space $X = [0, 1]^{784}$ of MNIST inputs.  
Samples $\{\x_i, \tau_0(\x_i)\}_{i}$ form a subset of the MNIST dataset comprising samples $\x_i$ belonging to two classes. Let $\psi_k(\x)$ be a \emph{noise function}:
\beq
\psi_k(\x) = \sin(k\|\x\|)
\eeq
corresponding to a \emph{radial wave} defined on the $784$-dimensional input space\footnote{The rationale behind using a radial wave is that it induces oscillations (simultaneously) along all spatial directions. Another viable option is to induce oscillations along the principle axes of the data: we have verified that the key trends of interest are preserved.}. The final target function $\tau_k$ is then given by $\tau_k = \tau_0 + \beta \psi_k$, where $\beta$ is the effective amplitude of the noise. We fit the same network as in Experiment \ref{experiment:lowfreqfirst} to the target $\tau_k$ with the MSE loss. In the first set of experiments, we ablate over $k$ for a pair of fixed $\beta$s, while in the second set we ablate over $\beta$ for a pair of fixed $k$s. In Figure~\ref{fig:mnistnoise}, we show the respective validation loss curves, where the validation set is obtained by evaluating $\tau_0$ on a separate subset of the data, i.e. $\{\x_j, \tau_0(\x_j)\}_j$. Figure~\ref{fig:mnistnoise_train} (in appendix \ref{app:lohifreqnoisemnist}) shows the respective training curves. 
\end{experiment}

\textbf{Discussion}. 
The profile of the loss curves varies significantly with the frequency of noise added to the target. In Figure~\ref{fig:mnistnoise_a}, we see that the validation performance is adversely affected by the amplitude of the low-frequency noise, whereas Figure~\ref{fig:mnistnoise_b} shows that the amplitude of high-frequency noise does not significantly affect the best validation score. This is explained by the fact that the network readily fits the noise signal if it is low frequency, whereas the higher frequency noise is only fit later in the training. In the latter case, the dip in validation score early in the training is when the network has learned the low frequency true target function $\tau_0$; the remainder of the training is spent learning the higher-frequencies in the training target $\tau$, as we shall see in the next experiment. Figures~\ref{fig:mnistnoise_c} and \ref{fig:mnistnoise_d} confirm that the dip in validation score exacerbates for increasing frequency of the noise. Further, we observe that for higher frequencies (e.g. $k = 0.5$), increasing the amplitude $\beta$ does not significantly degrade the best performance at the dip, confirming that the network is fairly robust to the amplitude of high-frequency noise.

Finally, we note that the dip in validation score was also observed by \citet{arpit2017closer} with i.i.d. noise\footnote{Recall that i.i.d. noise is white-noise, which has a constant Fourier spectrum magnitude in expectation, i.e. it also contains high-frequency components.} in a classification setting. 

\begin{experiment} \label{experiment:kernel}
To investigate the dip observed in Experiment~\ref{experiment:lohifreqnoisemnist}, we now take a more direct approach by considering a generalized notion of frequency. To that end, we project the network function to the space spanned by the orthonormal eigenfunctions $\varphi_n$ of the Gaussian RBF kernel \cite{braun2006model}. These eigenfunctions $\varphi_n$ (sorted by decreasing eigenvalues) resemble sinusoids \cite{fasshauer2011positive}, and the index $n$ can be thought of as being a proxy for the frequency, as can be seen from Figure~\ref{fig:freqofevecs} (see Appendix~\ref{app:kernel} for additional details and supporting plots). While we will call $\tilde{f}[n]$ as the spectrum of the function $f$, it should be understood as $\tilde{f}[n] = \langle f_{\mathcal{H}}, \varphi_n \rangle_{\mathcal{H}}$, where $f_{\mathcal{H}} \in \text{span}\{\varphi_n\}_n$ and $f_{\mathcal{H}}(\x_i) = f(\x_i)$ on the MNIST samples $\x_i \in X$. This allows us to define a noise function as: 
\beq
\psi_{\gamma}(\x) = \sum_{n}^{N} \left(\frac{n}{N}\right)^{\gamma} \varphi_n(\x)
\eeq
where $N$ is the number of available samples and $\gamma = 2$. Like in Experiment~\ref{experiment:lohifreqnoisemnist}, the target function is given by $\tau = \tau_0 + \beta \psi$, and the same network is trained to regress $\tau$. Figure~\ref{fig:kernelhfn} shows the (generalized) spectrum $\tau$ and $\tau_0$, and that of $f$ as training progresses. Figure~\ref{fig:kernelhfn_loss} (in appendix) shows the corresponding dip in validation loss, where the validation set is same as the training set but with true target function $\tau_0$ instead of the noised target $\tau$.
\end{experiment}

\begin{figure}[!htb]
\centering
  \includegraphics[width=1.0\linewidth]{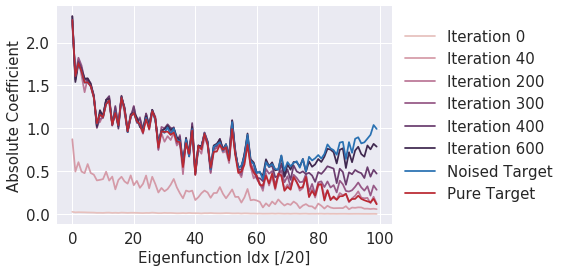}
  \caption{\small Spectrum of the network as it is trained on MNIST target with high-frequency noise (\emph{Noised Target}). We see that the network fits the true target at around the $200$th iteration, which is when the validation score dips (Figure~\ref{fig:kernelhfn_loss} in appendix). \label{fig:kernelhfn}}
\end{figure}
\begin{figure}[!h]
\centering
\includegraphics[width=0.25\textwidth]{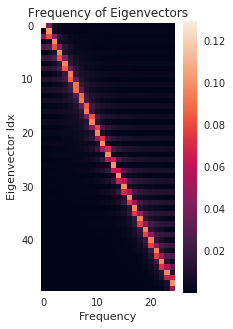}
\caption{\small Spectrum (x-axis for frequency, colorbar for magnitude) of the $n$-th (y-axis) eigenvector of the Gaussian RBF kernel matrix $K_{ij} = k(\x_i, \x_j)$, where the sample set is $\{x_i \in [0, 1]\}_{i = 1}^{50}$ is $N = 50$ uniformly spaced points between $0$ and $1$ and $k$ is the Gaussian RBF kernel function. \textbf{Gist:} The eigenfunctions with increasing $n$ roughly correspond to sinusoids of increasing frequency. Refer to Appendix~\ref{app:kernel} for more details. 
\label{fig:freqofevecs}} 
\end{figure}
\textbf{Discussion}. From Figure~\ref{fig:kernelhfn}, we learn that the drop in validation score observed in Figure~\ref{fig:mnistnoise} is exactly when the higher-frequencies of the noise signal are yet to be learned. As the network gradually learns the higher frequency eigenfunctions, the validation loss increases while the training loss continues to decrease. Thus these experiments show that the phenomenon of spectral bias persists on non-synthetic data and in high dimensional input spaces. 

\section{Not all Manifolds are Learned Equal} \label{sec:notallmfds}

In this section, we investigate subtleties that arise when the data lies on a lower dimensional manifold embedded in the higher dimensional input space of the model. 
We find that the \emph{shape} of the data-manifold impacts  the learnability of high frequencies  in a non-trivial way. As we shall see, this is because low frequency functions in the input space may have high frequency components when restricted to lower dimensional manifolds of complex shapes.
We demonstrate results in an illustrative minimal setting\footnote{We include additional experiments on MNIST and CIFAR-10 in appendices \ref{app:mnist} and \ref{app:cifar10connected}.}, free from unwanted confounding factors, and present a theoretical analysis of the phenomenon.

\vspace{0.1cm}
{\bf Manifold hypothesis.} We consider the case where the data lies on a lower dimensional \emph{data manifold} $\cM \subset \R^d$ embedded in input space \citep{Goodfellow-et-al-2016}, which we assume to be the image  
$\gamma([0,1]^m)$ of some injective mapping $\gamma : [0,1]^m \to \R^d$ defined on a lower dimensional latent space $[0, 1]^m$. Under this hypothesis and in the context of the standard regression problem, a target function $\tau : \cM \to \R$ defined on the data manifold can be identified with a function $\lambda = \tau \circ \gamma$ defined on the latent space. Regressing $\tau$ is therefore equivalent to finding $f : \R^d \to \R$ such that $f \circ \gamma$ matches  $\lambda$. Further, assuming that the data probability distribution $\mu$ supported on $\cM$ is induced by $\gamma$ from the uniform distribution $U$ in the latent space $[0,1]^m$, the mean square error can be expressed as:
\begin{align}
&\textup{MSE}^{(\x)}_{\mu}[f, \tau] = \mathbb{E}_{\x \sim\mu}|f(\x) - \tau(\x)|^2 = \nonumber \\
&\mathbb{E}_{\z \sim U} |(f(\gamma(\z)) - \lambda(\z)|^2 = \textup{MSE}^{(\z)}_{U}[f \circ \gamma, \lambda]
\end{align}
Observe that there is a vast space of degenerate solutions $f$ that minimize the mean squared error -- namely all functions on $\R^d$ that yield the same function when restricted to the data manifold $\cM$.

Our findings from the previous section suggest that neural networks are biased towards expressing a particular subset of such solutions, namely those that are low frequency. It is also worth noting that there exist methods that restrict the space of solutions: notably adversarial training \citep{goodfellow2014explaining} and Mixup \citep{zhang2017mixup}.

\vspace{0.1cm}
{\bf Experimental set up.} The experimental setting is designed to afford control over both the shape of the data manifold and the target function defined on it. We will consider the family of curves in $\R^2$ generated by mappings $\gamma_L: [0,1] \to \R^2$ given by
\begin{align} \label{eqn:toy_mfd_dataset}
\gamma_L(z) = &R_L(z) (\cos(2 \pi z), \sin(2 \pi z)) \nonumber \\ 
\textup{where} \; &R_L(z) = 1 + \frac{1}{2}\sin(2\pi L z)
\end{align}
%
%
\begin{figure}[t]
\centering
\begin{subfigure}{0.23\textwidth}
\includegraphics[width=1.0\linewidth]{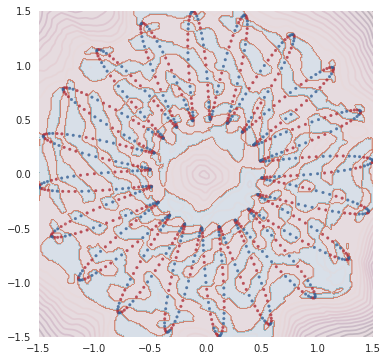}
\end{subfigure}
\begin{subfigure}{0.23\textwidth}
\includegraphics[width=1.0\linewidth]{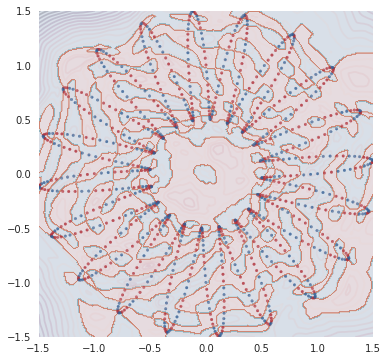}
\end{subfigure}
\caption{\small Functions learned by two identical networks (up to initialization) to classify the binarized value of a sine wave of frequency $k = 200$ defined on a $\gamma_{L = 20}$ manifold. Both yield close to perfect accuracy for the samples defined on the manifold (scatter plot), yet they differ significantly elsewhere. The shaded regions show the predicted class (Red or Blue) whereas contours show the confidence (absolute value of logits). \label{fig:learntfun}}
\end{figure}

\noindent Here, $\gamma_L([0, 1])$ defines the data-manifold and corresponds to a flower-shaped curve with $L$ petals, or a unit circle when $L = 0$ (see e.g. Fig~\ref{fig:learntfun}). Given a signal $\lambda : [0,1] \to \R$ defined on the latent space $[0, 1]$, the task entails learning a network $f : \R^2 \to \R$ such that $f \circ \gamma_L$ matches the signal $\lambda$.
\begin{figure*}[t]
\centering
\begin{subfigure}[t]{0.18\textwidth}
\includegraphics[width=1\linewidth]{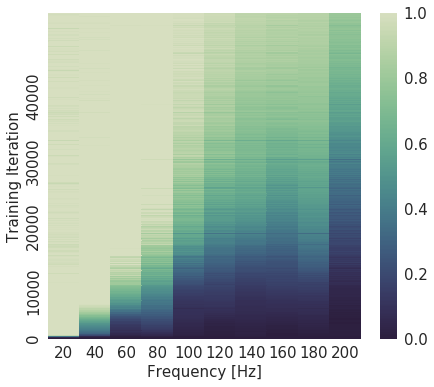}
\caption{\small $L = 0$}
\end{subfigure}
\begin{subfigure}[t]{0.18\textwidth}
\includegraphics[width=1\linewidth]{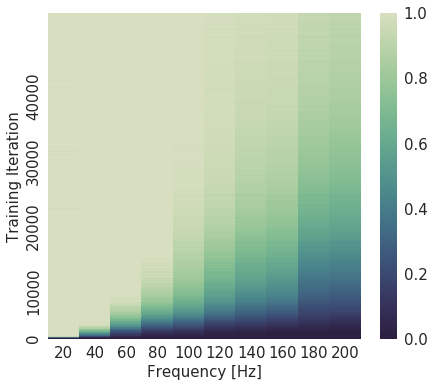}
\caption{\small $L = 4$}
\end{subfigure}
\begin{subfigure}[t]{0.18\textwidth}
\centering
\includegraphics[width=1\textwidth]{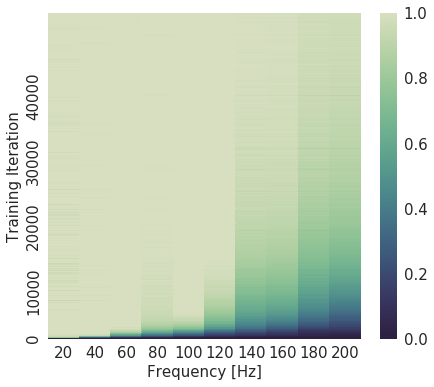}
\caption{\small $L = 10$}
\end{subfigure}
\begin{subfigure}[t]{0.18\textwidth}
\centering
\includegraphics[width=1\textwidth]{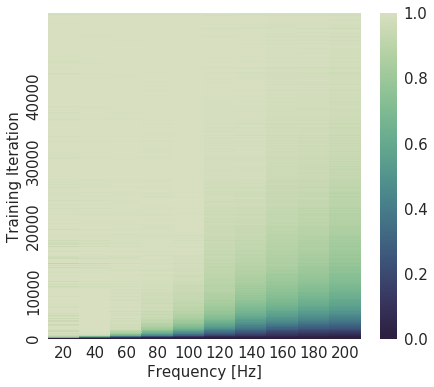}
\caption{\small $L = 16$}
\end{subfigure}
\begin{subfigure}[t]{0.18\textwidth}
\centering
  \includegraphics[width=1\textwidth]{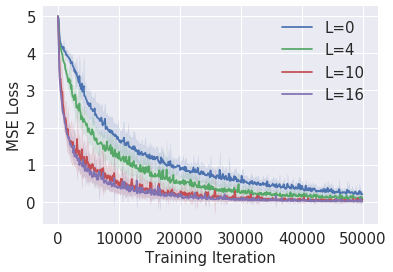}
  \caption{\small Loss curves} 
  \label{fig:lowfreqfirst_mfd_loss}
\end{subfigure}

\caption{\small (a,b,c,d): Evolution of the network spectrum (x-axis for frequency, colorbar for magnitude) during training (y-axis) for the same target functions defined on manifolds $\gamma_L$ for various $L$. Since the target function has amplitudes $A_i = 1$ for all frequencies $k_i$ plotted, the colorbar is clipped between 0 and 1. (e): Corresponding learning curves. 
\textbf{Gist}: Some manifolds (here with larger $L$) make it easier for the network to learn higher frequencies than others.} \label{fig:lowfrqfirst_mfd}
\end{figure*}

\begin{figure}[!htb]
\centering
  \includegraphics[width=0.65\linewidth]{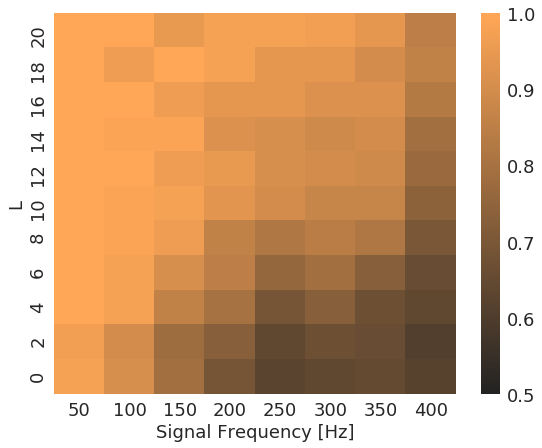}
  \caption{\small Heatmap of training accuracies of a network trained to predict the binarized value of a sine wave of given frequency (x-axis) defined on $\gamma_L$ for various $L$ (y-axis).\label{fig:lvk_acc_table}}
\end{figure}

\vspace{0.1cm}
\begin{experiment} \label{experiment:lowfrqfirst_mfd}
The set-up is similar to that of Experiment \ref{experiment:lowfreqfirst}, and $\lambda$ is as defined in Eqn.~\ref{target} with frequencies $\kappa = (20, 40, ..., 180, 200)$, and amplitudes $A_i = 1 \, \forall \, i$. The model $f$ is trained on the dataset $\{\gamma_L(z_i), \lambda(z_i)\}_{i = 1}^{N}$ with $N = 1000$ uniformly spaced samples $z_i$ between $0$ and $1$. The spectrum of $f \circ \gamma_L$ in expectation over $\varphi_i \sim U(0, 2\pi)$ is monitored as training progresses, and the result is shown in Fig~\ref{fig:lowfrqfirst_mfd} for various $L$. Fig~\ref{fig:lowfreqfirst_mfd_loss} shows the corresponding mean squared error curves. More experimental details in appendix \ref{app:experiment:lowfreqfirst_mfd}.

The results demonstrate a clear attenuation of the spectral bias as $L$ grows.  Moreover, Fig~\ref{fig:lowfreqfirst_mfd_loss}  suggests that the larger the $L$, the easier the learning task.
\end{experiment}

\begin{experiment} \label{experiment:lowfrqfirst_mfd_class}
Here, we adapt the setting of Experiment~\ref{experiment:lowfrqfirst_mfd} to binary classification by simply thresholding the function $\lambda$ at $0.5$ to obtain a binary target signal. To simplify visualization, we only use signals with a single frequency mode $k$, such that $\lambda(z) = \sin(2\pi k z + \varphi)$. We train the same network on the resulting classification task with cross-entropy loss\footnote{We use Pytorch's \texttt{BCEWithLogitsLoss}. Internally, it takes a sigmoid of the network's output (the logits) before evaluating the cross-entropy.} for $k \in \{50, 100, ..., 350, 400\}$ 
and $L \in \{0, 2, ..., 18, 20\}$. 
The heatmap in Fig~\ref{fig:lvk_acc_table} shows the classification accuracy for each $(k, L)$ pair. Fig~\ref{fig:learntfun} shows visualizations of  the functions learned by the same network, trained on $(k, L) = (200, 20)$ under identical conditions up to random initialization. 

Observe that increasing $L$ (i.e. going up a column in Fig~\ref{fig:lvk_acc_table}) results in better (classification) performance for the same target signal. This is the same behaviour as we observed in Experiment~\ref{experiment:lowfrqfirst_mfd} (Fig~\ref{fig:lowfrqfirst_mfd}a-d), but now with binary cross-entropy loss instead of the MSE.
\end{experiment}

\textbf{Discussion.} These experiments hint towards 
a rich interaction between the shape of the manifold and the effective difficulty of the learning task. 
The key mechanism underlying this phenomenon (as we formalize below) is that the relationship between frequency spectrum of the network $f$ and that of the fit $f \circ \gamma_L$ is mediated by the embedding map $\gamma_L$. In particular, we argue that a given signal defined on the manifold is easier to fit when the coordinate functions of the manifold embedding itself has high frequency components. Thus, in our experimental setting, the same signal embedded in a flower with more petals can be captured with lower frequencies of the network.

To understand this mathematically, we address the following questions: given a target function $\lambda$, how small can the frequencies of a solution $f$ be such that  $f \circ \gamma = \lambda$? And further, how does this relate to the geometry of the data-manifold $\cM$ induced by $\gamma$? 
To find out, we write the Fourier transform of the composite function,
\begin{align} \label{eqn:ftfuncomp}
\widetilde{(f \circ \gamma)}(\mathbf{l}) &= \int \mathbf{dk} \tilde{f}(\mathbf{k}) P_{\gamma}(\mathbf{l}, \mathbf{k}) \\ \mathrm{where} \; P_{\gamma}(\mathbf{l}, \mathbf{k}) &= \int_{[0, 1]^m} \mathbf{dz} \, e^{i(\mathbf{k} \cdot \gamma(\mathbf{z}) - \mathbf{l} \cdot \mathbf{z})} \nonumber
\end{align}
The kernel $P_{\gamma}$ depends on only $\gamma$ and elegantly encodes the correspondence between frequencies $ \mathbf{k} \in \R^d$ in input space and frequencies $\mathbf{l} \in \R^m$ in the latent space $[0, 1]^m$.  Following a procedure from \citet{bergnerspectral}, we can further investigate the behaviour of the kernel 
in the regime where the stationary phase approximation is applicable, i.e. when $l^2 + k^2 \to \infty$ (cf. section 3.2. of \citet{bergnerspectral}). In this regime, the integral $P_\gamma$ is dominated 
by critical points $\bar{\mathbf{z}}$ of its phase, which satisfy 
\beq \label{Pstationnary}
\bl = J_\gamma(\bar{\z}) \, \kk 
\eeq 
where $J_\gamma(\z)_{ij} = \nabla_i \gamma_j(\z)$ is the $m \times d$ Jacobian matrix of $\gamma$. Non-zero values of the kernel correspond to pairs $(\bl, \kk)$ such that Eqn~\ref{Pstationnary} has a solution. Further, given that the components of $\gamma$ (i.e. its coordinate functions) are defined on an interval $[0, 1]^m$, one can use their Fourier series representation together with Eqn~\ref{Pstationnary} to obtain a condition on their frequencies (shown in appendix \ref{app:ftfuncomp}). More precisely, we find that the $i$-th component of the RHS in Eqn~\ref{Pstationnary} is proportional to $\mathbf{p} \tilde{\gamma}_i[\mathbf{p}] k_i$ where $\mathbf{p} \in \mathbb{Z}^m$ is the frequency of the coordinate function $\gamma_i$. This yields that we can get arbitrarily large frequencies $l_i$ if $\tilde{\gamma}_i[\mathbf{p}]$ is large\footnote{Consider that the data-domain is bounded, implying that $\tilde{\gamma}$ cannot be arbitrarily scaled.} enough for large $\mathbf{p}$, even when $k_i$ is fixed. 

This is precisely what Experiments \ref{experiment:lowfrqfirst_mfd} and \ref{experiment:lowfrqfirst_mfd_class} demonstrate in a minimal setting. From Eqn~\ref{eqn:toy_mfd_dataset}, observe that the coordinate functions have a frequency mode at $L$. For increasing $L$, it is apparent that the frequency magnitudes $l$ (in the latent space) that can be expressed with the same frequency $k$ (in the input space) increases with increasing $L$. This allows the remarkable interpretation that the neural network function can express large frequencies on a manifold ($l$) with smaller frequencies w.r.t its input domain ($k$), provided that the coordinate functions of the data manifold embedding itself has high-frequency components. 

\section{Related Work}


A number of works have focused on showing that  neural networks are capable of approximating arbitrarily complex functions. \citet{hornik1989multilayer, cybenko1989approximation, leshno1993multilayer} have shown that neural networks can be universal approximators when given sufficient width; more recently, \citet{lu2017expressive} proved that this property holds also for width-bounded networks. \citet{montufar2014number} showed that the number of linear regions of deep ReLU networks 
grows polynomially with width and exponentially with depth;  \citet{raghu2016expressive} generalized this result and provided asymptotically tight bounds.  There have been various results of the benefits of depth for efficient approximation \citep{ganguli2016, Telgarsky2016, eldan2016power}.  
 These analysis on the expressive power of deep neural networks can in part explain why over-parameterized networks can perfectly learn random input-output mappings \citep{understanding_DL}. 

Our work more directly follows the line of research on implicit regularization in neural networks trained by gradient descent \citep{neyshabur2014search, soudry2017implicit, poggio2018theory, neyshabur2017exploring}.  
In fact, while our Fourier analysis of deep ReLU networks also reflects  the width and depth dependence of their expressivity, we focused on showing a learning bias of these networks towards simple functions with dominant lower frequency components. We view our results as a first step towards formalizing the findings of \citet{arpit2017closer}, where it is empirically shown that deep networks prioritize learning simple patterns of the data during training. 


A few other works studied neural networks through the lens of harmonic analysis. For example, \citet{candes1999harmonic} used the ridgelet transform to build constructive procedures for approximating a given function by neural networks, in the case of oscillatory activation functions. This approach has been recently generalized to unbounded activation functions by 
\citet{sonoda2017neural}. \citet{eldan2016power} use insights on the support of the Fourier spectrum of two-layer networks to derive a worse-case depth-separation result. \citet{barron1993universal} makes use of Fourier space properties of the target function to derive an architecture-dependent approximation bound. 
In a concurrent and independent work, \citet{Xu2018TrainingBO} make the same observation that lower frequencies are learned first. 
The subsequent work by \citet{Xu2018UnderstandingTA} proposes a theoretical analysis of the phenomenon in the case of 2-layer networks with sigmoid activation, based on the spectrum of the sigmoid function. 

In light of our findings, it is worth comparing the case of neural networks and other popular algorithms such that kernel machines (KM) and $K$-nearest neighbor classifiers. We refer to the 
Appendix \ref{app:kmknn} for a detailed discussion and references. In summary, our discussion there suggests that 1. DNNs strike a good balance between function smoothness and expressivity/parameter-efficiency compared with KM; 2. DNNs learn a smoother function compared with $K$NNs since the spectrum of the DNN decays faster compared with $K$NNs in the experiments shown there.

\section{Conclusion}

We studied deep ReLU networks through the lens of Fourier analysis. Several conclusions can be drawn from our analysis.  
While neural networks can approximate arbitrary functions, we find that they favour \emph{low frequency} ones -- hence they exhibit a bias towards smooth functions -- a phenomenon that we called \emph{spectral bias}. We also illustrated how the geometry of the data manifold impacts expressivity in a non-trivial way, as 
high frequency functions defined on  complex manifolds can be expressed by lower frequency network functions defined in input space. 

We view future work that explore the properties of neural networks in Fourier domain as promising. For example,  the Fourier transform affords a natural way of measuring how fast a function can change within a small neighborhood in its input domain; as such, it is a strong candidate for quantifying and analyzing the \emph{sensitivity} of a model -- which in turn provides a natural measure of complexity \citep{novak2018sensitivity}.  
We hope to encourage more research in this direction.

\clearpage
\newpage
\section*{Acknowledgements}
The authors would like to thank Joan Bruna, R\'emi Le Priol, Vikram Voleti, Ullrich K\"othe, Steffen Wolf, Lorenzo Cerrone, Sebastian Damrich, as well as  the anonymous reviewers for their valuable feedback. 

\bibliography{references}
\bibliographystyle{icml2019}

\clearpage
\newpage

\appendix    
\begin{appendix}

\section{Experimental Details}
\subsection{Experiment \ref{experiment:lowfreqfirst}} \label{app:experiment:lowfreqfirst}
We fit a 6 layer ReLU network with 256 units per layer $f_{\theta}$ to the target function $\lambda$, which is a superposition of sine waves with increasing frequencies: 
$$\lambda: [0, 1] \to \mathbb{R}, \, \lambda(z) = \sum_{i} A_i \sin(2 \pi k_i z + \varphi_i)$$ 
where $k_i = (5, 10, 15, ..., 50)$, and $\varphi_{i}$ is sampled from the uniform distribution $U(0, 2 \pi)$. In the first setting, we set equal amplitude for all frequencies, i.e. $A_i = 1 \, \forall \, i$, while in the second setting we assign larger amplitudes to the higher frequencies, i.e. $A_i = (0.1, 0.2, ..., 1)$. We sample $\lambda$ on 200 uniformly spaced points in $[0, 1]$ and train the network for $80000$ steps of full-batch gradient descent with Adam \citep{kingma2014adam}. Note that we do not use stochastic gradient descent to avoid the stochasticity in parameter updates as a confounding factor. We evaluate the network on the same 200 point grid every 100 training steps and compute the magnitude of its (single-sided) discrete fourier transform at frequencies $k_i$ which we denote with $|\tilde{f}_{k_i}|$. Finally, we plot in figure \ref{fig:lowfreqfirst} the normalized magnitudes $\frac{|\tilde{f}_{k_i}|}{A_i}$ averaged over 10 runs (with different sets of sampled phases $\varphi_i$). We also record the spectral norms of the weights at each layer as the training progresses, which we plot in figure \ref{fig:lowfreqfirst} for both settings (the spectral norm is evaluated with 10 power iterations). In figure \ref{fig:learntfuncs_eq_amps}, we show an example target function and the predictions of the network trained on it (over the iterations), and in figure \ref{fig:lowfrqfirst_loss_curves} we plot the loss curves.

\begin{figure}
\begin{subfigure}[t]{0.45\textwidth}
\centering
\includegraphics[width=0.99\textwidth]{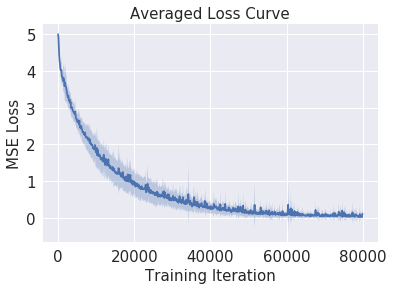}
\caption{\small Equal Amplitudes.}
\end{subfigure}
\hfill
\begin{subfigure}[t]{0.45\textwidth}
\centering
\includegraphics[width=0.99\textwidth]{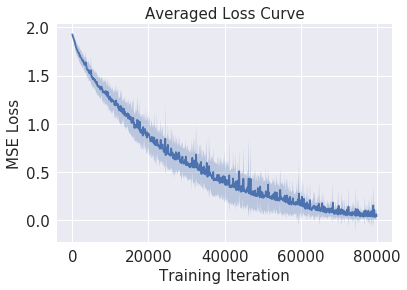}
\caption{\small Increasing Amplitudes.}
\end{subfigure}
\caption{\small Loss curves averaged over multiple runs. (cf. Experiment \ref{experiment:lowfreqfirst})} \label{fig:lowfrqfirst_loss_curves}
\end{figure}

\subsection{Experiment \ref{experiment:lowfrqfirst_mfd}} \label{app:experiment:lowfreqfirst_mfd}

We use the same 6-layer deep 256-unit wide network and define the target function 
$$\lambda: \mathcal{D} \to \mathbb{R}, \; z \mapsto \lambda(z) = \sum_{i} A_i \sin (2\pi k_i z + \varphi_i)$$
where $k_i = (20, 40, ..., 180, 200)$, $A_i = 1 \, \forall \, i$ and $\varphi \sim U(0, 2\pi)$. We sample $\phi$ on a grid with 1000 uniformly spaced points between 0 and 1 and map it to the input domain via $\gamma_L$ to obtain a dataset $\{(\gamma_L(z_j), \lambda(z_j))\}_{j = 0}^{999}$, on which we train the network with 50000 full-batch gradient descent steps of Adam. On the same 1000-point grid, we evaluate the magnitude of the (single-sided) discrete Fourier transform of $f_{\theta} \circ \gamma_L$ every 100 training steps at frequencies $k_i$ and average over 10 runs (each with a different set of sampled $z_i$'s). Fig~\ref{fig:lowfrqfirst_mfd} shows the evolution of the spectrum as training progresses for $L = 0, 4, 10, 16$, and Fig~\ref{fig:lowfreqfirst_mfd_loss} shows the corresponding loss curves. 

\subsection{Experiment \ref{experiment:lohifreqnoisemnist}} \label{app:lohifreqnoisemnist}
In Figure~\ref{fig:mnistnoise_train}, we show the training curves corresponding to Figure~\ref{fig:mnistnoise}. 
\begin{figure*}[h]
\centering
\begin{subfigure}[t]{0.245\textwidth}
\includegraphics[width=1\linewidth]{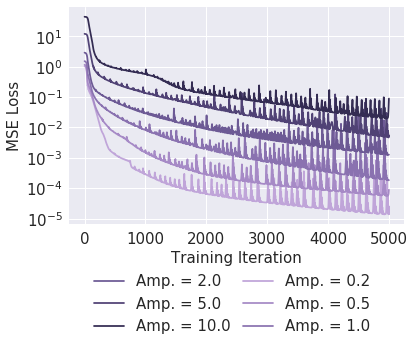}
\caption{\small $k = 0.1$ \label{fig:mnistnoise_train_a}}
\end{subfigure}
\begin{subfigure}[t]{0.245\textwidth}
\includegraphics[width=1\linewidth]{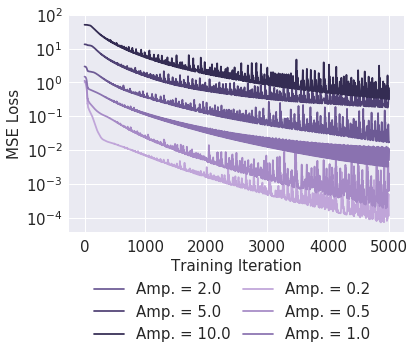}
\caption{\small $k = 1$ \label{fig:mnistnoise_train_b}}
\end{subfigure}
\begin{subfigure}[t]{0.245\textwidth}
\centering
\includegraphics[width=1\textwidth]{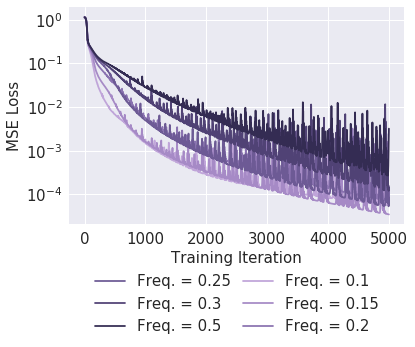}
\caption{\small $\beta = 0.5$ \label{fig:mnistnoise_train_c}}
\end{subfigure}
\begin{subfigure}[t]{0.245\textwidth}
\centering
\includegraphics[width=1\textwidth]{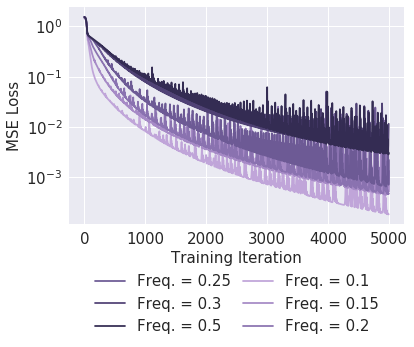}
\caption{\small $\beta = 1.$ \label{fig:mnistnoise_train_d}}
\end{subfigure}
\caption{\small (a,b,c,d): Training curves for various settings of noise amplitude $\beta$ and frequency $k$ corresponding to Figure~\ref{fig:mnistnoise}.} \label{fig:mnistnoise_train}
\end{figure*}

\subsection{Experiment \ref{experiment:kernel}} \label{app:kernel}
Consider the Gaussian Radial Basis Kernel, given by: 
\beq
k: X \times X \to \R,\, k_{\sigma}(\x, \y) \mapsto \exp \left(\frac{\|\x - \y\|}{\sigma^2}\right)
\eeq
where $X$ is a compact subset of $\R^d$ and $\sigma \in \R_{+}$ is defined as the width of the kernel\footnote{We drop the subscript $\sigma$ to simplify the notation.}. Since $k$ is positive definite \cite{fasshauer2011positive}, Mercer's Theorem can be invoked to express it as: 
\beq
k(\x, \y) = \sum_{n = 1}^{\infty} \lambda_i \varphi_n(\x)\varphi_n(\y)
\eeq
where $\varphi_n$ is the eigenfunction of $k$ satisfying: 
\beq
\int k(\x, \y) \varphi_n(\y) \mathbf{dy} = \langle k(\x, \cdot), \varphi_n \rangle = \lambda_n \varphi_n(\x)
\eeq
Due to positive definiteness of the kernel, the eigenvalues $\lambda_i$ are non-negative and the eigenfunctions $\varphi_n$ form an orthogonal basis of $L^2(X)$, i.e. $\langle \varphi_i, \varphi_j \rangle = \delta_{ij}$. The analogy to the final case is easily seen: let $X = {\x_i}_{i = 1}^{N}$ be the set of samples, $f: X \to \R$ a function. One obtains (cf. Chapter 4 \cite{rasmussen2004gaussian}): 
\beq
\langle k(\x, \cdot), f \rangle = \sum_{i = 1}^{N} k(\x, \x_i) f_i
\eeq
where $f_i = f(\x_i)$. Now, defining $K$ as the positive definite kernel matrix with elements $K_{ij} = k(\x_i \x_j)$, we consider it's eigendecomposition $V \Lambda V^T$ where $\Lambda$ is the diagonal matrix of (w.l.o.g sorted) eigenvalues $\lambda_1 \le ... \le \lambda_N$ and the columns of $V$ are the corresponding eigenvectors. This yields: 
\begin{align}
&k(\x_i, \x_j) = K_{ij} = (V \Lambda V^T)_{ij} = \sum_{n = 1}^{N} \lambda_n v_{ni} v_{nj} \nonumber \\
= &\sum_{n=1}^{N} \lambda_n \varphi_n(\x_i) \varphi_n(\x_j) \implies \varphi_{n}(\x_i) = v_{ni}
\end{align}
Like in \cite{braun2006model}, we define the \emph{spectrum} $\tilde{f}[n]$ of the function $f$ as: 
\beq
\tilde{f}[n] = \langle f, \varphi_n \rangle = \mathbf{f} \cdot \mathbf{v}_n
\eeq
where $\mathbf{f} = (f(\x_1), ..., f(\x_N))$. The value $n$ can be thought of a generalized notion of \emph{frequency}. Indeed, it is known \cite{fasshauer2011positive, rasmussen2004gaussian}, for instance, that the eigenfunctions $\varphi_n$ resemble sinusoids with increasing frequencies (for increasing $n$ or decreasing $\lambda_n$). In Figure~\ref{fig:freqofevecs}, we plot the eigenvectors $\mathbf{v}_0$ and $\mathbf{v}_N$ for $\{\x_i\}_{i=1}^{50}$ uniformly spaced between $[0, 1]$. Further, in Figure~? we evaluate the discrete Fourier transform of all $N = 50$ eigenvectors, and find that the eigenfunction index $n$ does indeed coincide with frequency $k$. Finally, we remark that the link between signal complexity and the spectrum is extensively studied in \cite{braun2006model}. 


\begin{figure}
\begin{subfigure}[t]{0.45\textwidth}
\centering
\includegraphics[width=0.99\textwidth]{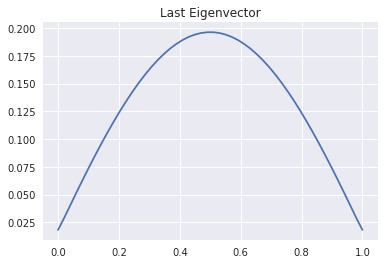}
\caption{\small Eigenvector with the largest eigenvalue ($n = 1$).}
\end{subfigure}
\hfill
\begin{subfigure}[t]{0.45\textwidth}
\centering
\includegraphics[width=0.99\textwidth]{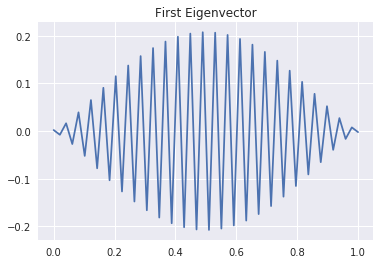}
\caption{\small Eigenvector with the smallest eigenvalue ($n = 50$).}
\end{subfigure}
\caption{\small Two extreme eigenvectors of the Gaussian RBF kernel for $50$ uniformly spaced samples between $0$ and $1$.} \label{fig:evplots}
\end{figure}

\subsubsection{Loss Curves Accompanying Figure~\ref{fig:kernelhfn}}

\begin{figure}[!h]
\centering
\includegraphics[width=0.35\textwidth]{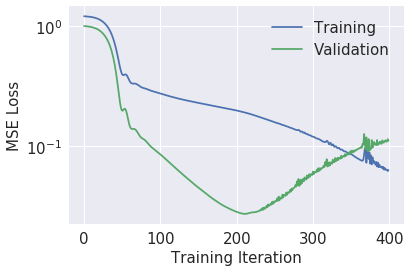}
\caption{\small Loss curves for the Figure~\ref{fig:kernelhfn}. We find that the validation loss dips at around the $200$th iteration. \label{fig:kernelhfn_loss}} 
\end{figure}

\subsection{Qualitative Ablation over Architectures} \label{app:arch_abl}
\begin{figure}
\begin{subfigure}[t]{0.45\textwidth}
\centering
\includegraphics[width=0.99\textwidth]{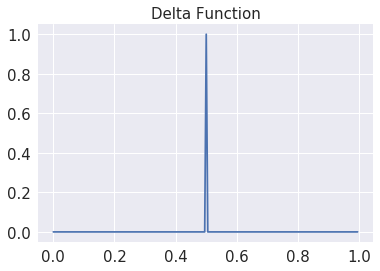}
\caption{\small Sampled $\delta$-function at $x = 0.5$. \label{fig:delta_peak:fn}}
\end{subfigure}
\hfill
\begin{subfigure}[t]{0.45\textwidth}
\centering
\includegraphics[width=0.99\textwidth]{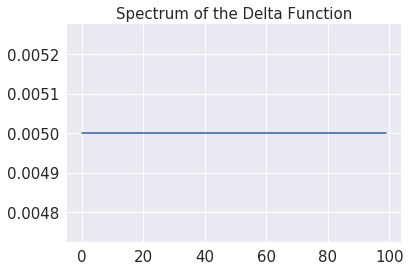}
\caption{\small Constant Spectrum of the $\delta$-function. \label{fig:delta_peak:spec}}
\end{subfigure}
\caption{\small The target function used in Experiment~\ref{experiment:arch_ablation}.} \label{fig:delta_peak}
\end{figure}
Theorem \ref{theorem:spectraldecay} exposes the relationship between the fourier spectrum of a network and its depth, width and max-norm of parameters. The following experiment is a qualitative ablation study over these variables. 

\begin{experiment} \label{experiment:arch_ablation}
In this experiment, we fit various networks to the $\delta$-function at $x = 0.5$ (see Fig~\ref{fig:delta_peak:fn}). Its spectrum is constant for all frequencies (Fig~\ref{fig:delta_peak:spec}), which makes it particularly useful for testing how well a given network can fit large frequencies. Fig~\ref{fig:spec_abl_clip} shows the ablation over weight clip (i.e. max parameter max-norm), Fig~\ref{fig:spec_abl_depth} over depth and Fig~\ref{fig:spec_abl_width} over width. Fig~\ref{fig:spec_abl_clip_pred} exemplarily shows how the network prediction evolves with training iterations. All networks are trained for 60K iterations of full-batch gradient descent under identical conditions (Adam optimizer with $lr = 0.0003$, no weight decay). 
\end{experiment}

\begin{figure*}
\begin{subfigure}[t]{0.24\textwidth}
\centering
\includegraphics[width=0.99\textwidth]{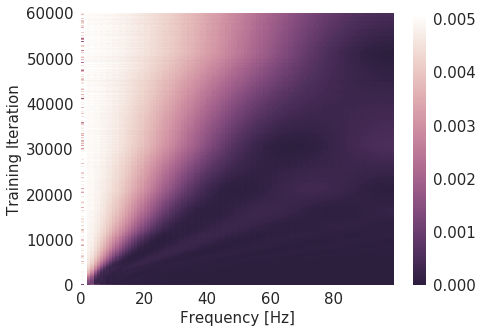}
\caption{\small Depth $ = 3$.}
\end{subfigure}
\hfill
\begin{subfigure}[t]{0.24\textwidth}
\centering
\includegraphics[width=0.99\textwidth]{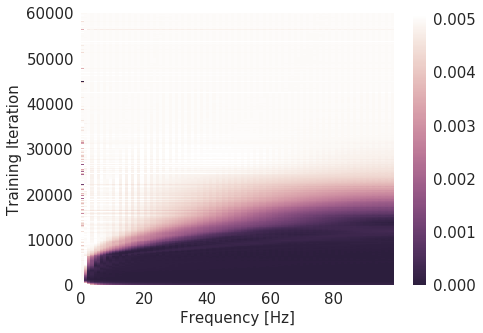}
\caption{\small Depth $ = 4$.}
\end{subfigure}
\hfill
\begin{subfigure}[t]{0.24\textwidth}
\centering
\includegraphics[width=0.99\textwidth]{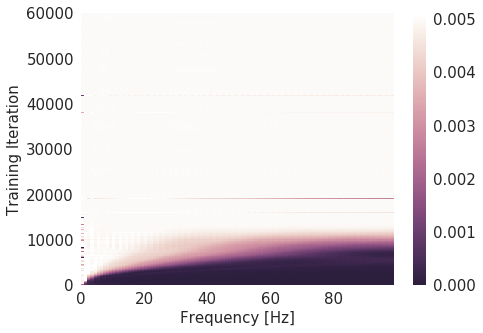}
\caption{\small Depth $ = 5$.}
\end{subfigure}
\hfill
\begin{subfigure}[t]{0.24\textwidth}
\centering
\includegraphics[width=0.99\textwidth]{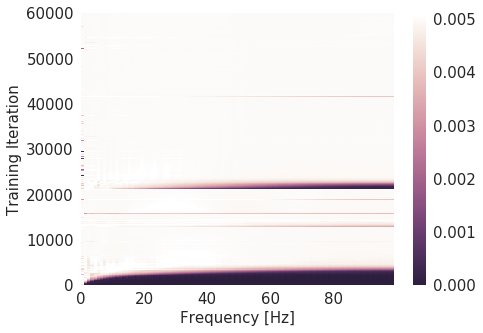}
\caption{\small Depth $ = 6$.}
\end{subfigure}
\caption{\small Evolution with training iterations (y-axis) of the Fourier spectrum (x-axis for frequency, and colormap for magnitude) for a network with \textbf{varying depth}, width $= 16$ and weight clip $= 10$. The spectrum of the target function is a constant $0.005$ for all frequencies.} \label{fig:spec_abl_depth}
\end{figure*}

\begin{figure*}
\begin{subfigure}[t]{0.24\textwidth}
\centering
\includegraphics[width=0.99\textwidth]{figures/spec_w=16-d=3-K=10.png}
\caption{\small Width $ = 16$.}
\end{subfigure}
\hfill
\begin{subfigure}[t]{0.24\textwidth}
\centering
\includegraphics[width=0.99\textwidth]{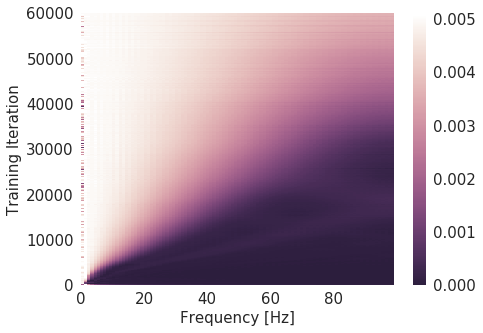}
\caption{\small Width $ = 32$.}
\end{subfigure}
\hfill
\begin{subfigure}[t]{0.24\textwidth}
\centering
\includegraphics[width=0.99\textwidth]{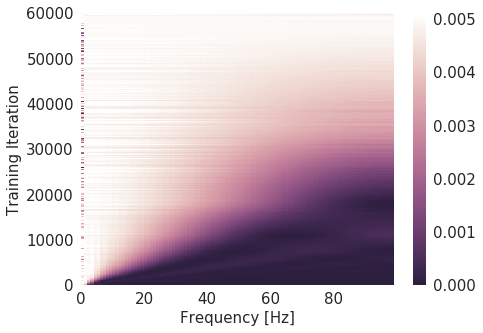}
\caption{\small Width $ = 64$.}
\end{subfigure}
\hfill
\begin{subfigure}[t]{0.24\textwidth}
\centering
\includegraphics[width=0.99\textwidth]{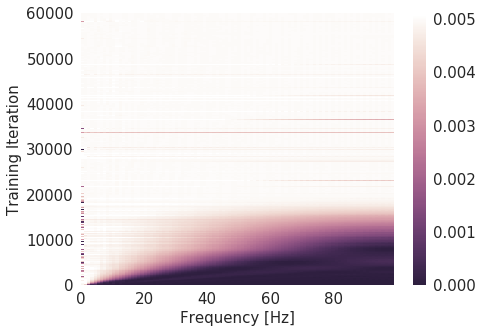}
\caption{\small Width $ = 128$.}
\end{subfigure}
\caption{\small Evolution with training iterations (y-axis) of the Fourier spectrum (x-axis for frequency, and colormap for magnitude) for a network with \textbf{varying width}, depth $= 3$ and weight clip $= 10$. The spectrum of the target function is a constant $0.005$ for all frequencies.} \label{fig:spec_abl_width}
\end{figure*}

\begin{figure*}
\begin{subfigure}[t]{0.24\textwidth}
\centering
\includegraphics[width=0.99\textwidth]{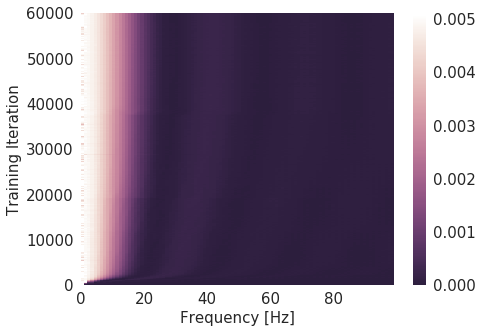}
\caption{\small Weight Clip $ = 0.1$.}
\end{subfigure}
\hfill
\begin{subfigure}[t]{0.24\textwidth}
\centering
\includegraphics[width=0.99\textwidth]{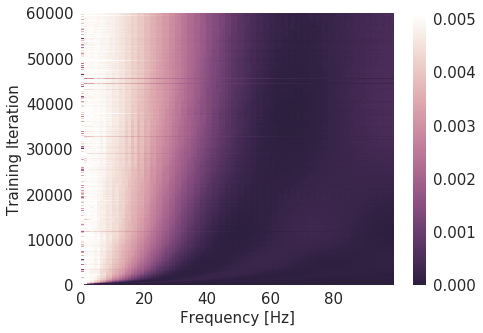}
\caption{\small Weight Clip $ = 0.15$.}
\end{subfigure}
\hfill
\begin{subfigure}[t]{0.24\textwidth}
\centering
\includegraphics[width=0.99\textwidth]{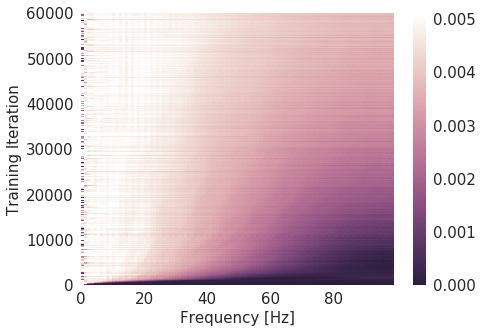}
\caption{\small Weight Clip $ = 0.2$.}
\end{subfigure}
\hfill
\begin{subfigure}[t]{0.24\textwidth}
\centering
\includegraphics[width=0.99\textwidth]{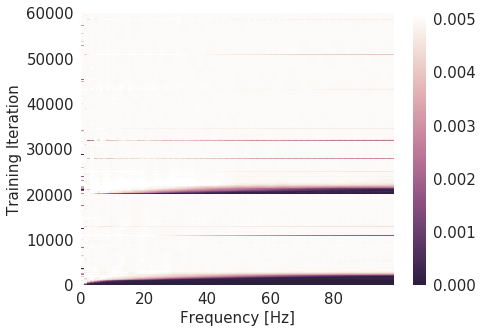}
\caption{\small Weight Clip $ = 2$.}
\end{subfigure}
\caption{\small Evolution with training iterations (y-axis) of the Fourier spectrum (x-axis for frequency, and colormap for magnitude) for a network with \textbf{varying weight clip}, depth $= 6$ and width $= 64$. The spectrum of the target function is a constant $0.005$ for all frequencies.} \label{fig:spec_abl_clip}
\end{figure*}

\begin{figure*}
\begin{subfigure}[t]{0.24\textwidth}
\centering
\includegraphics[width=0.99\textwidth]{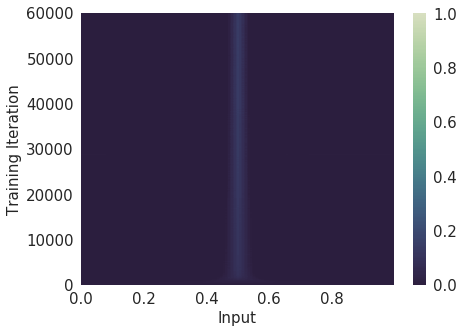}
\caption{\small Weight Clip $ = 0.1$.}
\end{subfigure}
\hfill
\begin{subfigure}[t]{0.24\textwidth}
\centering
\includegraphics[width=0.99\textwidth]{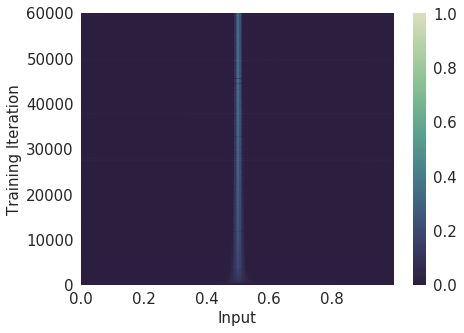}
\caption{\small Weight Clip $ = 0.15$.}
\end{subfigure}
\hfill
\begin{subfigure}[t]{0.24\textwidth}
\centering
\includegraphics[width=0.99\textwidth]{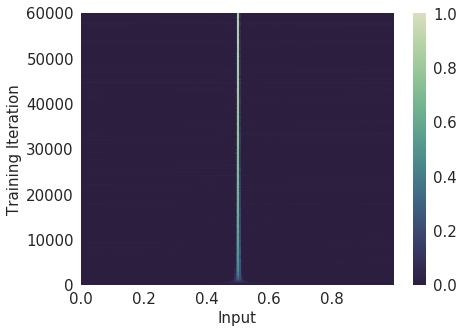}
\caption{\small Weight Clip $ = 0.2$.}
\end{subfigure}
\hfill
\begin{subfigure}[t]{0.24\textwidth}
\centering
\includegraphics[width=0.99\textwidth]{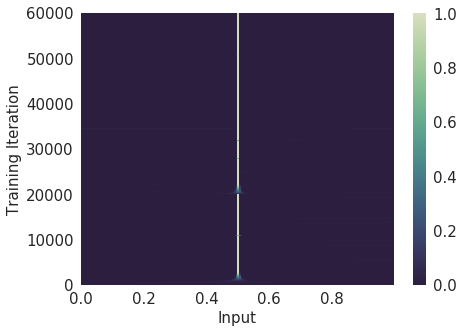}
\caption{\small Weight Clip $ = 2$.}
\end{subfigure}
\caption{\small Evolution with training iterations (y-axis) of the network prediction (x-axis for input, and colormap for predicted value) for a network with \textbf{varying weight clip}, depth $= 6$ and width $= 64$. The target function is a $\delta$ peak at $x = 0.5$.} \label{fig:spec_abl_clip_pred}
\end{figure*}

We make the following observations. 
\begin{enumerate}[label=(\alph*)]
    \item Fig~\ref{fig:spec_abl_depth} shows that increasing the depth (for fixed width) significantly improves the network's ability to fit higher frequencies (note that the depth increases linearly). 
    \item Fig~\ref{fig:spec_abl_width} shows that increasing the width (for fixed depth) also helps, but the effect is considerably weaker (note that the width increases exponentially). 
    \item Fig~\ref{fig:spec_abl_clip} shows that increasing the weight clip (or the max parameter max-norm) also helps the network fit higher frequencies. 
\end{enumerate}
The above observations are all consistent with Theorem~\ref{theorem:spectraldecay}, and further show that lower frequencies are learned first (i.e. the spectral bias, cf. Experiment~\ref{experiment:lowfreqfirst}). Further, Figure~\ref{fig:spec_abl_clip} shows that constraining the Lipschitz constant (weight clip) prevents the network from learning higher frequencies, furnishing evidence that the $\mathcal{O}(L_f)$ bound can be tight. 

\subsection{MNIST: A Proof of Concept} \label{app:mnist}
In the following experiment, we show that given two manifolds of the same dimension -- one flat and the other not -- the task of learning random labels is harder to solve if the input samples lie on the same manifold. We demonstrate on MNIST under the assumption that the manifold hypothesis is true, and use the fact that the spectrum of the target function we use (white noise) is constant in expectation, and therefore independent of the underlying coordinate system when defined on the manifold. 

\begin{experiment} \label{experiment:ae}
In this experiment, we investigate if it is easier to learn a signal on a more realistic data-manifold like that of MNIST (assuming the manifold hypothesis is true), and compare with a flat manifold of the same dimension. To that end, we use the $64$-dimensional feature-space $\mathcal{E}$ of a denoising\footnote{This experiment yields the same result if variational autoencoders are used instead.} autoencoder as a proxy for the real data-manifold of unknown number of dimensions. The decoder functions as an embedding of $\mathcal{E}$ in the input space $X = \mathbb{R}^{784}$, which effectively amounts to training a network on the reconstructions of the autoencoder. For comparision, we use an injective embedding\footnote{The xy-plane is $\mathbb{R}^3$ an injective embedding of a subset of $\mathbb{R}^2$ in $\mathbb{R}^3$.} of a 64-dimensional hyperplane in $X$. The latter is equivalent to sampling $784$-dimensional vectors from $U([0, 1])$ and setting all but the first 64 components to zero. The target function is white-noise, sampled as scalars from the uniform distribution $U([0, 1])$. Two identical networks are trained under identical conditions, and Fig~\ref{fig:ae_loss} shows the resulting loss curves, each averaged over 10 runs. 
\end{experiment}

This result complements the findings of \cite{arpit2017closer} and \cite{understanding_DL}, which show that it's easier to fit random labels to random inputs if the latter is defined on the full dimensional input space (i.e. the dimension of the flat manifold is the same as that of the input space, and not that of the underlying data-manifold being used for comparison).

\begin{figure*}[!h]
\centering
\includegraphics[width=0.5\textwidth]{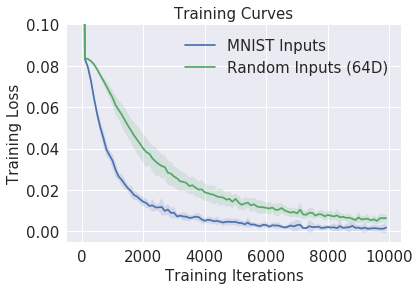}
\caption{\small Loss curves of two identical networks trained to regress white-noise under identical conditions, one on MNIST reconstructions from a DAE with 64 encoder features (blue), and the other on 64-dimensional random vectors (green). \label{fig:ae_loss}}
\end{figure*}

\subsection{Cifar-10: It's All Connected} \label{app:cifar10connected}

We have seen that deep neural networks are biased towards learning low frequency functions. This should have as a consequence that isolated \emph{bubbles} of constant prediction are rare. This in turn implies that given any two points in the input space and a network function that predicts the same class for the said points, there should be a path connecting them such that the network prediction does not change along the path. In the following, we present an experiment where we use a path finding method to find such a path between all Cifar-10 input samples indeed exist. 

\begin{figure*}
	\includegraphics[width=\textwidth]{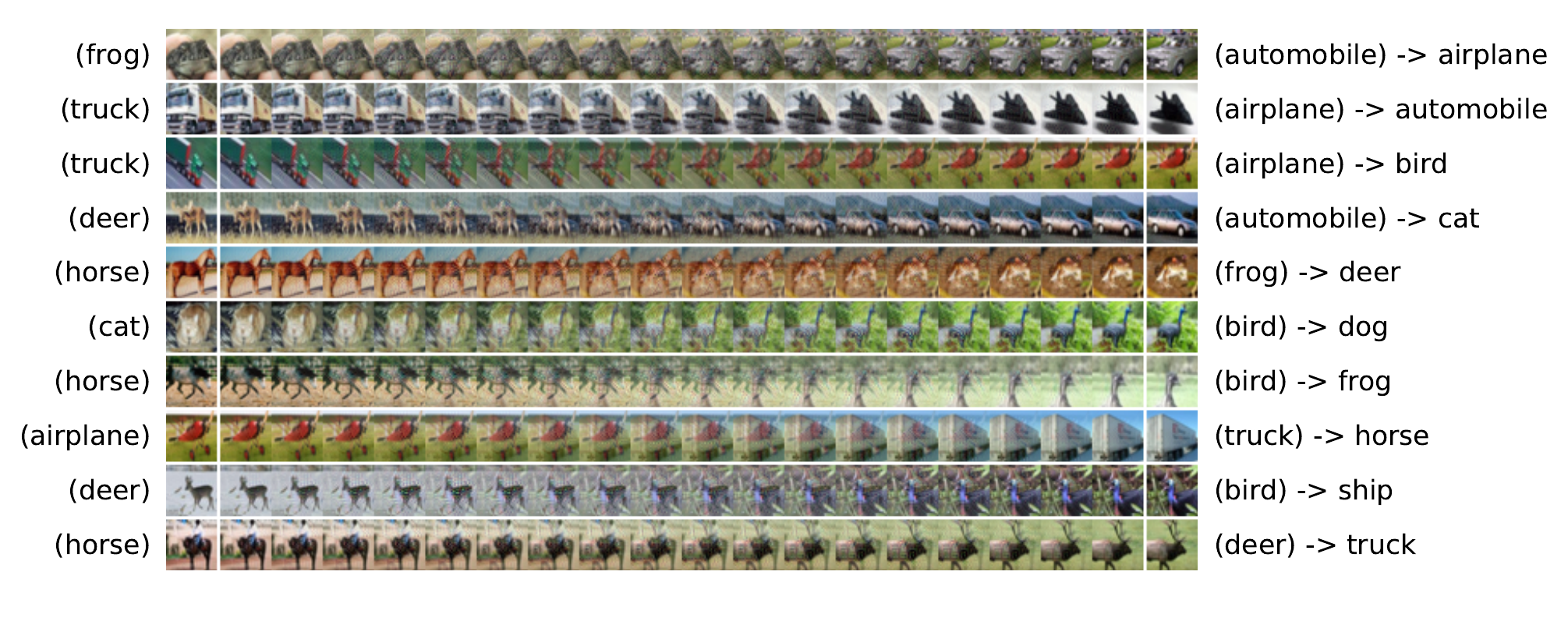}
    \caption{Path between CIFAR-10 adversarial examples (e.g. ``frog'' and ``automobile'', such that all images are classified as ``airplane'').}
    \label{fig:adv_adv_paths}
\end{figure*}

\begin{experiment}
Using AutoNEB \cite{kolsbjerg2016automated}, we construct paths between (adversarial) Cifar-10 images that are classified by a ResNet20 to be all of the same target class.
AutoNEB bends a linear path between points in some space $\mathbb{R}^m$ so that some maximum energy along the path is minimal.
Here, the space is the input space of the neural network, i.e. the space of $32\times32\times3$ images and the logit output of the ResNet20 for a given class is minimized.
We construct paths between the following points in image space:
\begin{itemize}
	\item From one training image to another,
    \item from a training image to an adversarial,
    \item from one adversarial to another.
\end{itemize}
We only consider pairs of images that belong to the same class $c$ (or, for adversarials, that originate from another class $\neq c$, but that the model classifies to be of the specified class $c$).
For each class, we randomly select 50 training images and select a total of 50 random images from all other classes and generate adversarial samples from the latter.
Then, paths between all pairs from the whole set of images are computed.

The AutoNEB parameters are chosen as follows:
We run four NEB iterations with 10 steps of SGD with learning rate $0.001$ and momentum $0.9$.
This computational budget is similar to that required to compute the adversarial samples.
The gradient for each NEB step is computed to maximize the logit output of the ResNet-20 for the specified target class $c$.
We use the formulation of NEB without springs \cite{draxler2018essentially}.

The result is very clear: We can find paths between \emph{all} pairs of images for all CIFAR10 labels that do not cross a single decision boundary.
This means that all paths belong to the same connected component regarding the output of the DNN. This holds for all possible combinations of images in the above list. Figure \ref{fig:adversarial_paths_all} shows connecting training to adversarial images and Figure \ref{fig:adv_adv_paths} paths between pairs of adversarial images. Paths between training images are not shown, they provide no further insight. Note that the paths are strikingly simple: Visually, they are hard to distinguish from the linear interpolation. Quantitatively, they are essentially (but not exactly) linear, with an average length $(3.0 \pm 0.3) \%$ longer than the linear connection.
\end{experiment}

\begin{figure*}
	\includegraphics[width=\textwidth]{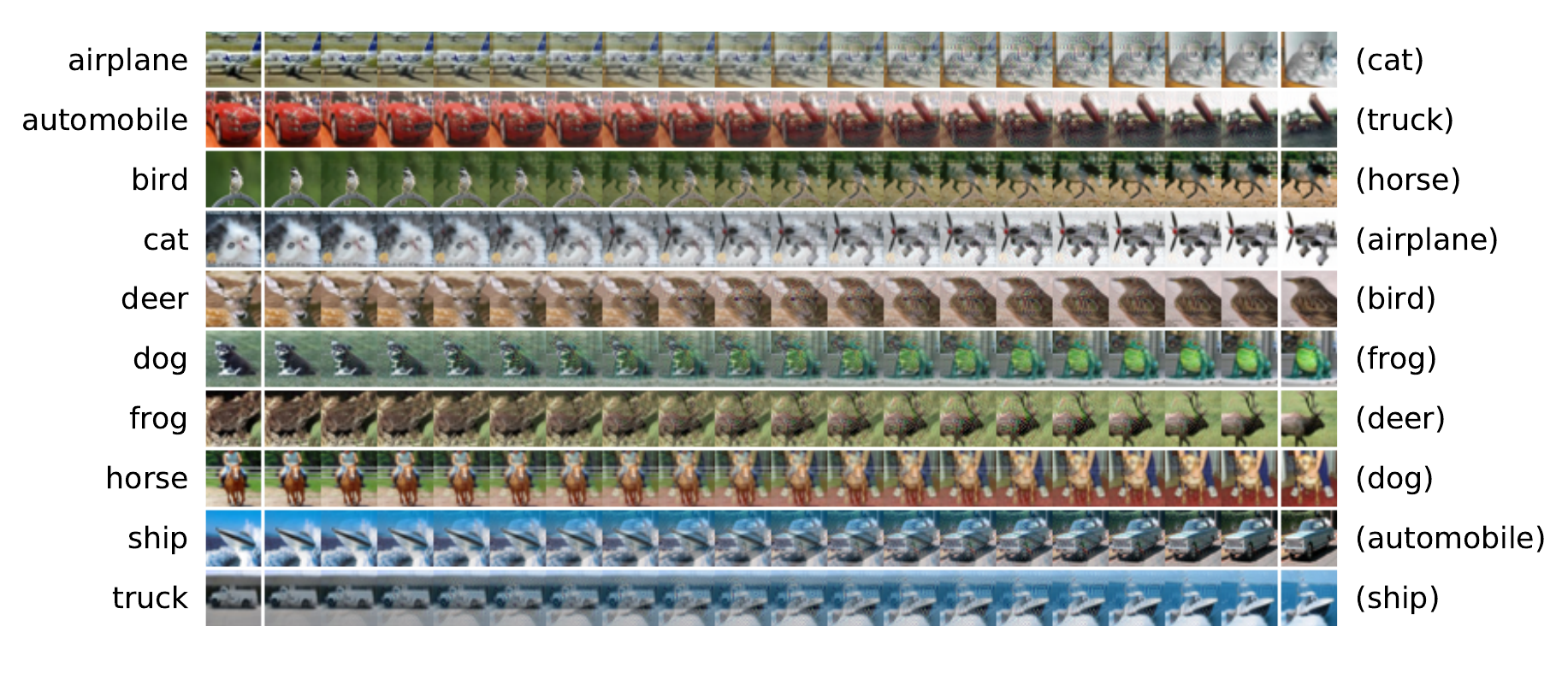}
    \caption{\small Each row is a path through the image space from an adversarial sample (right) to a true training image (left). All images are classified by a ResNet-20 to be of the class of the training sample on the right with at least 95\% softmax certainty. This experiment shows we can find a path from adversarial examples (right, Eg. "(cat)") that are classified as a particular class ("airplane") are connected to actual training samples from that class (left, "airplane") such that all samples along that path are also predicted by the network to be of the same class.}
    \label{fig:adversarial_paths_all}
\end{figure*}

\section{The Continuous Piecewise Linear Structure of Deep ReLU Networks} \label{app:cpwlrelu}

We consider the class of ReLU network functions $f: \R^d  \mapsto \R$ defined by Eqn.~\ref{reluDNN2}. 
Following the  terminology of \cite{raghu2016expressive, montufar2014number}, each linear region of the network then corresponds to a unique \emph{activation pattern}, wherein each hidden neuron is assigned an activation variable $\epsilon \in \{-1, 1\}$, conditioned on whether its input is positive or negative. 
ReLU networks can be explictly expressed as a sum over all possible activation patterns, as in the following lemma.

\begin{lemma} 
Given $L$ binary vectors $\sub{\epsilon}{1}, \cdots \sub{\epsilon}{L}$ with $\sub{\epsilon}{k} \in \{-1, 1\}^{d_k}$, let  ${T}^{(k)}_{\sub{\epsilon}{k}} : \R^{d_{k-1}} \to \R^{d_k}$ the affine function defined by $T^{(k)}_{\sub{\epsilon}{k}}(\uu)_i =  (T^{(k)}(\uu))_i$ if $(\epsilon_{k})_i = 1$, and $0$ otherwise. ReLU network functions, as defined in Eqn.~\ref{reluDNN2}, can be expressed as 
\beq \label{PWL}
f(\x) = \sum_{\sub{\epsilon}{1}, \cdots \sub{\epsilon}{L}}    1_{P_{f, \epsilon}}(\x) \, \left(T^{(L+1)} \circ T^{(L)}_{\sub{\epsilon}{L}} \circ \cdots \circ T^{(1)}_{\sub{\epsilon}{1}}\right)(\x)
\eeq
where $1_P$ denotes the indicator function of the subset $P\subset \R^d$,  and $P_{f, \epsilon}$ is the polytope defined as the set of solutions of the following linear inequalities (for all $k = 1, \cdots, L$):  
\beq
(\epsilon_{k})_i \,(T^{(k)}\circ T^{(k-1)}_{\sub{\epsilon}{{k-1}}}\circ \cdots \circ T^{(1)}_{\sub{\epsilon}{1}})(\x)_i \geq 0, \quad i=1, \cdots d_k
\eeq


\end{lemma}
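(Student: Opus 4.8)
The plan is to prove the identity in Eqn.~\ref{PWL} by induction on the depth, tracking the forward pass layer by layer and showing that, on each region $P_{f, \epsilon}$, the genuine network computation (with the nonlinearities $\sigma$ in place) coincides exactly with the masked affine composition $T^{(L+1)} \circ T^{(L)}_{\sub{\epsilon}{L}} \circ \cdots \circ T^{(1)}_{\sub{\epsilon}{1}}$. The conceptual crux is that $P_{f,\epsilon}$ is defined \emph{self-referentially}: its defining inequality at layer $k$ involves the masked maps $T^{(j)}_{\sub{\epsilon}{j}}$ of the earlier layers $j < k$, i.e. the computation one would perform \emph{if} the pattern were already known to equal $\epsilon$. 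The induction is precisely what certifies that this hypothetical computation agrees with the true one exactly on $P_{f,\epsilon}$.

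First I would fix $\x$ and a pattern $\epsilon = (\sub{\epsilon}{1}, \ldots, \sub{\epsilon}{L})$, and adopt the inductive hypothesis that for $\x \in P_{f,\epsilon}$ the output of the first $k$ layers of the genuine network equals $(T^{(k)}_{\sub{\epsilon}{k}} \circ \cdots \circ T^{(1)}_{\sub{\epsilon}{1}})(\x)$. The base case $k=1$ is the elementary observation that $\sigma(T^{(1)}(\x))_i = \max(0, T^{(1)}(\x)_i)$ equals $T^{(1)}_{\sub{\epsilon}{1}}(\x)_i$ whenever $(\epsilon_1)_i \, T^{(1)}(\x)_i \ge 0$: if $(\epsilon_1)_i = 1$ the pre-activation is nonnegative and passes through, and if $(\epsilon_1)_i = -1$ it is nonpositive and is zeroed, matching the masked map in both cases. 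For the inductive step, the hypothesis identifies the input to layer $k$ with the masked composition of the earlier layers; applying $T^{(k)}$ produces the pre-activations $(T^{(k)} \circ T^{(k-1)}_{\sub{\epsilon}{k-1}} \circ \cdots \circ T^{(1)}_{\sub{\epsilon}{1}})(\x)$, whose signs are exactly those controlled by the defining inequalities of $P_{f,\epsilon}$, so the same elementary ReLU identity promotes the composition from layer $k-1$ to layer $k$. Applying the final affine map $T^{(L+1)}$ (which carries no nonlinearity) then gives $f(\x) = (T^{(L+1)} \circ T^{(L)}_{\sub{\epsilon}{L}} \circ \cdots \circ T^{(1)}_{\sub{\epsilon}{1}})(\x)$ for every $\x \in P_{f,\epsilon}$.

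To assemble the sum, I would note that every $\x$ realizes some genuine activation pattern (take $(\epsilon_k)_i = \sign$ of the $i$-th pre-activation), so the regions $\{P_{f,\epsilon}\}_\epsilon$ cover $\R^d$, while a point at which all pre-activations are nonzero determines its pattern uniquely, so the interiors are disjoint. Hence for all $\x$ outside the measure-zero set where some pre-activation vanishes, exactly one indicator $1_{P_{f,\epsilon}}(\x)$ equals $1$, and by the previous step its summand equals $f(\x)$, which is Eqn.~\ref{PWL}. The main obstacle is not the algebra but this boundary bookkeeping: because the inequalities are the non-strict $\ge 0$, a point on the shared boundary of several closed regions is counted by several indicators at once, and a naive sum can over-count (e.g. for a single neuron with vanishing pre-activation both $(\epsilon)_i = \pm 1$ are admissible and one double-counts the bias term). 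I would resolve this either by asserting the identity only up to the measure-zero vanishing set -- all that the subsequent Fourier computation of Lemma~\ref{FTRelu} needs -- or by fixing a tie-breaking convention (counting a vanishing pre-activation as active) so that the $P_{f,\epsilon}$ genuinely partition $\R^d$; on such boundaries the affine pieces for the tied patterns agree on the relevant neuron anyway, so the \emph{value} $f(\x)$ is unambiguous even where the indicator accounting is not.
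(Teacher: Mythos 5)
Your proof is correct, and in fact the paper offers no proof of this lemma at all---it is stated in Appendix \ref{app:cpwlrelu} as a known structural fact (following the activation-pattern viewpoint of the cited prior work) and the text moves straight on to its consequences. Your layer-by-layer induction is exactly the canonical argument one would supply: the base case correctly matches $\max(0,\cdot)$ against the masked map $T^{(1)}_{\epsilon^{(1)}}$ under the sign constraint, and the inductive step correctly observes that the defining inequalities of $P_{f,\epsilon}$ at layer $k$ constrain precisely the pre-activations produced by feeding the masked composition of the earlier layers into $T^{(k)}$, which is what makes the self-referential definition of the polytope close the loop. Your final point is also well taken and goes beyond what the paper says: with non-strict inequalities the closed regions $P_{f,\epsilon}$ overlap on the measure-zero set where some pre-activation vanishes, so the displayed sum literally over-counts there; either a tie-breaking convention or an almost-everywhere reading is needed, and since the tied affine pieces agree in value at such points and the downstream use in Lemma \ref{FTRelu} is purely integral, nothing in the paper is affected. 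No gaps.
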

$f$ is therefore affine on each of the polytopes $P_{f, \epsilon}$, which finitely partition the input space $\R^d$ to convex polytopes. Remarkably, the correspondence between ReLU networks and CPWL functions goes both ways: \citet{arora2018understanding} show that every CPWL function can be represented by a ReLU network, which in turn endows ReLU networks with the universal approximation property.

Finally, in the standard basis, each affine map  ${T}^{(k)}: \R^{d_{k-1}} \to \R^{d_k}$ is specified by a weight matrix $W^{(k)} \in \R^{d_{k-1}} \times \R^{d_{k}}$ and a bias vector $b^{(k)} \in \R^{d_k}$. 
In the linear region $P_{f, \epsilon}$,  $f$ can be expressed as
$f_\epsilon(x) = W_\epsilon x + b_\epsilon$, where in particular
\beq 
W_\epsilon = W^{(L+1)} W^{(L)}_{\epsilon_L} \cdots W^{(1)}_{\epsilon_1} \in \R^{1\times d}, 
\eeq 
where $W^{(k)}_{\epsilon}$ is obtained from $W^{(k)}$ by setting its $j$th column to zero whenever $(\epsilon_{k})_j = -1$. 

\section{Fourier Analysis of ReLU Networks} \label{app:ftrelu}

\subsection{Proof of Lemma \ref{FTRelu}}
\begin{proof} \textbf{Case 1: The function $f$ has compact support.} The vector-valued function $\kk f(\x) e^{i \kk \cdot \x}$ is continuous everywhere and  has well-defined and continuous gradients almost everywhere. So by Stokes' theorem (see e.g  \citet{Spivak71}), the integral of its divergence is a pure boundary term.  Since we restricted to functions with compact support, the theorem yields
\beq 
\int \nabla_\x \cdot \left[\kk f(\x) e^{-i \kk \cdot \x}  \right] \mathbf{dx} = 0 
\eeq 
The integrand is $(\kk \cdot (\nabla_\x f)(\x) - i k^2  f(\x)) e^{-i \kk \cdot \x}$, so we deduce, 
\beq
\hat{f}(\kk) = \frac{1}{-i k^2} \kk \cdot \! \int (\nabla_\x f)(\x)\, e^{-i \kk \cdot \x}
\eeq 
Now, within each polytope of the decomposition (\ref{PWL}), $f$ is affine so its gradient is a constant vector, 
$\nabla_\x f_\epsilon = W_\epsilon^T$, 
which gives the desired result (\ref{FTRelu}).

\textbf{Case 2: The function $f$ does not have compact support.} 
Without the assumption of compact support, the function $f$ is not squared-integrable. The Fourier transform therefore only exists in the sense of distributions, as defined below.  

Let $\mathcal{S}$ be the Schwartz space over $\R^d$ of rapidly decaying test functions which together with its derivatives decay to zero as $x \to \infty$ faster than any power of $x$. A tempered distribution is a continuous linear functional on $\mathcal{S}$. A function $f$ that doesn't grow faster than a polynomial at infinity can be identified with a tempered distribution $T_f$ as: 
\beq
T_f: \mathcal{S} \to \R, \, \varphi \mapsto \langle f, \varphi \rangle = \int_{\R^d} f(\x) \varphi(\x) \mathbf{dx}
\eeq
In the following, we shall identify $T_f$ with $f$. The Fourier transform $\tilde f$ of the tempered distribution is defined as: 
\beq
\langle \tilde f, \varphi \rangle := \langle f, \tilde \varphi \rangle
\eeq
where $\tilde \varphi$ is the Fourier transform of $\varphi$. In this sense, the standard notion of the Fourier transform is generalized to functions that are not squared-integrable. 

Consider the continuous piecewise-linear ReLU network $f: \R^d \to \R$. Since it can grow at most linearly, we interpret it as a tempered distribution on $\R^d$. 
Recall that the linear regions $P_{\epsilon}$ are enumerated by $\epsilon$. Let $f_{\epsilon}$ be the restriction of $f$ to $P_\epsilon$, making $f_{\epsilon}(\x) = W_{\epsilon}^T \x$. The distributional derivative of $f$ is given by: 
\beq \label{eq:cpwlgrad}
\nabla_\x f = \sum_{\epsilon} \nabla_\x f_\epsilon \cdot 1_{P_\epsilon} = \sum_{\epsilon} W_{\epsilon}^T 1_{P_\epsilon}
\eeq
where $1_{P_\epsilon}$ is the indicator over $P_{\epsilon}$ and we used $\nabla_\x f_{\epsilon} = W_{\epsilon}^T$.
It then follows from elementary properties of Schwartz spaces (see e.g. Chapter 16 of \citet{serov2017fourier}) that:
\begin{align}
[\widetilde{\nabla_{\x} f}](\kk) &= -i \kk \tilde{f}(\kk) \\
\implies \tilde{f}(\kk) &= \frac{1}{-ik^2}\kk \cdot [\widetilde{\nabla_{\x} f}](\kk)
\end{align}

Together with Eqn~\ref{eq:cpwlgrad} and linearity of the Fourier transform, this gives the desired result (\ref{FTRelu}). 

\end{proof}

\subsection{Fourier Transform of Polytopes} \label{app:ftpolytope}

\subsubsection{Theorem 1 of \citet{diaz2016fourier}}
Let $F$ be a $m$ dimensional polytope in $\mathbb{R}^d$, such that $1 \le m \le d$. Denote by $\mathbf{k} \in \mathbb{R}^d$ a vector in the Fourier space, by $\phi_{\mathbf{k}}(x) = -\mathbf{k} \cdot \mathbf{x}$ the linear phase function, by $\tilde{F}$ the Fourier transform of the indicator function on $F$, by $\partial F$ the boundary of $F$ and by $\text{vol}_m$ the $m$-dimensional (Hausdorff) measure. Let $\text{Proj}_F (\mathbf{k})$ be the orthogonal projection of $\mathbf{k}$ on to $F$ (obtained by removing all components of $\mathbf{k}$ orthogonal to $F$). Given a $m - 1$ dimensional facet $G$ of $F$, let $\mathbf{N}_F(G)$ be the unit normal vector to $G$ that points out of $F$. It then holds: 

1. If $\text{Proj}_F(\mathbf{k}) = 0$, then $\phi_{\mathbf{k}}(x) = \Phi_{\mathbf{k}}$ is constant on $F$, and we have:
\beq 
\tilde{F} = \text{vol}_F(F) e^{i \Phi_{\mathbf{k}}}
\eeq

2. But if $\text{Proj}_F(\mathbf{k}) \ne 0$, then: 
\beq 
\tilde{F} = i \sum_{G\in \partial F} \frac{\text{Proj}_F(\mathbf{k}) \cdot \mathbf{N}_{F}(G)}{\|\text{Proj}_F(\mathbf{k})\|^2} \tilde{G}(\mathbf{k})
\eeq

\subsubsection{Discussion}

The above theorem provides a recursive relation for computing the Fourier transform of an arbitrary polytope. More precisely, the Fourier transform of a $m$-dimensional polytope is expressed as a sum of fourier transforms over the $m-1$ dimensional boundaries of the said polytope (which are themselves polytopes) times a $\mathcal{O}(k^{-1})$ \emph{weight} term (with $k = \|\mathbf{k}\|$). The recursion terminates if $\text{Proj}_F(\mathbf{k}) = 0$, which then yields a constant. 

To structure this computation, \citet{diaz2016fourier} introduce a book-keeping device called the \emph{face poset} of the polytope. It can be understood as a weighted directed acyclic graph (DAG) with polytopes of various dimensions as its nodes. We start at the root node which is the full dimensional polytope $P$ (i.e. we initially set $m = n$). For all of the codimension-one boundary faces $F$ of $P$, we then draw an edge from the root $P$ to node $F$ and weight it with a term given by:
\beq 
W_{F, G} = i\frac{\text{Proj}_F(\mathbf{k}) \cdot \mathbf{N}_{F}(G)}{\|\text{Proj}_F(\mathbf{k})\|^2} 
\eeq
and repeat the process iteratively for each $F$. Note that the weight term is $\mathcal{O}(k^{-1})$ where $\text{Proj}_F(\mathbf{k}) \ne 0$. This process yields tree paths $T: F_0=P \to F_1 \to ... \to F_{|T|}$ where each $F_{i + 1} \in \partial F_i$ has one dimension less than $F_i$. For a given path and $\mathbf{k}$, the terminal node for this path, $F_{n_T}$, is the first polytope for which $\text{Proj}_{F_{n_T}}(\mathbf{k}) = 0$. 
The final Fourier transform is obtained by multiplying the weights along each path and summing over all tree paths:
\beq \label{eq:fullftpolytope}
\tilde{1}_P(\mathbf{k}) = \sum_{T} 
\prod_{i=0}^{|T|-1} 
W_{F_i, F_{i+1}}
\text{vol}_{F_{|T|}}(F_{|T|}) e^{i \Phi_{\mathbf{k}}}
\eeq 
where  $\Phi^{(T)} = \kk \cdot \x_0^T$ for an arbitrary point $\x_0^T$ in $F_{|T|}$.  

To write this as a weighted sum of indicator functions, as in Lemma \ref{lemma:ftpolyrat},  let $\mathcal T_n$ denote the set of all tree paths $T$ of length $n$, i.e. $|T| = n$.  For a tree path $T$, let  $S(T)$ be the orthogonal to the terminal node $F_n$, i.e the vectors $\kk$ such that $\text{Proj}_{F_n}(\mathbf{k}) = 0$. The sum over $T$ in Eqn (\ref{eq:fullftpolytope}) can be split as: 
\beq \label{eq:fullftpolytope2}
\tilde{1}_P = 
\sum_{n=0}^{d} \frac{1_{G_n}}{k^n} \sum_{T \in \mathcal T_n} 1_{S(T)} \prod_{i=0}^{n-1} \bar{W}_{F^{T}_i, F^{T}_{i + 1}} \text{vol}_{F^{T}_{n}}(F^{T}_{n}) e^{i \Phi^{(T)}_{\mathbf{k}}}
\eeq
where   $\bar{W}_{F, G} = k W_{F,G}$ and
$G_n = \bigcup_{T \in \mathcal T_n} S(T)$. 
In words, $G_n$ is the set of all vectors $\kk$ that are orthogonal to some $n$-codimensional face of the polytope.
We  identify:
\beq \label{eq:dndef}
D_q = \sum_{T \in \mathcal T_n} 1_{S(T)} \prod_{i=0}^{n-1} \bar{W}_{F^T_i, F^T_{i + 1}} \text{vol}_{F^T_n}(F^T_n) e^{i \Phi^{(T)}_{\mathbf{k}}}
\eeq
and $D_0(\kk) = \text{vol}(P)$ to obtain Lemma \ref{lemma:ftpolyrat}. Observe that $D_n$ depends on $k$ only via the phase term $e^{i \Phi^{(T)}_{\mathbf{k}}}$, implying that $D_n = \Theta(1) \, (k \to \infty)$. 

Informally, for a generic  vector $\mathbf{k}$, all paths terminate at the zero-dimensional vertices of the original polytope, i.e. $\text{dim}(F_n) = 0$, implying the length of the path $n$ equals the number of dimensions $d$, yielding a $\mathcal{O}(k^{-d})$ spectrum. The exceptions occur if a path terminates prematurely, because $\mathbf{k}$ happens to lie orthogonal to some $d - r$-dimensional face $F_r$ in the path, in which case we are left with a $\mathcal{O}(k^{-r})$ term (with $r < d$) which dominates asymptotically. Note that all vectors orthogonal to the $d-r$ dimensional face $F_r$ lie on a $r$-dimensional subspace of $\mathbb{R}^d$. Since a polytope has a finite number of faces (of any dimension), the $\mathbf{k}$'s for which the Fourier transform is $\mathcal{O}(k^{-r})$ (instead of $\mathcal{O}(k^{-d})$) lies on a finite union of closed subspaces of dimension $r$ (with $r < d$). The Lebesgue measure of all such lower dimensional subspaces  for all such $r$ is $0$, leading us to the conclusion that the spectrum decays as $\mathcal{O}(k^{-d})$ for \emph{almost all} $\mathbf{k}$'s. 

\subsection{On Theorem~\ref{theorem:spectraldecay}} \label{app:moarspectraldecay}
Equation \ref{FTbound} can be obtained by swapping the (finite) sum over $\epsilon$ in Lemma~\ref{FTRelu} with that over the paths $T$ in Eqn~\ref{eq:fullftpolytope2}. In particular, we have: 
\beq \label{eq:ftreludn}
\tilde f = \sum_{n=0}^{d} \frac{1_{H_n}}{k^{n + 1}} \sum_{\epsilon} W_{\epsilon} D_n^{\epsilon} 1_{G_n^\epsilon}
\eeq
Now, the sum $\sum_{\epsilon} W_{\epsilon} D_n^{\epsilon}(\hat \kk) I_{G_n^\epsilon}(\kk)$ is supported on the union:
\beq
H_n = \bigcup_{\epsilon} G_n^\epsilon
\eeq
Identifying: 
\begin{align}
C_n(\cdot, \theta) = \sum_{\epsilon} W_{\epsilon} D_n^{\epsilon} 1_{G_n^\epsilon}
\end{align}
where $C_n(\cdot, \theta) = \mathcal{O}(1)\,(k \to \infty)$, we obtain Theorem~\ref{theorem:spectraldecay}. Further, if $N_f$ is the number of linear regions of the network and $L_f = \max_{\epsilon} \|W_{\epsilon}\|$, we see that $C_{n} = \mathcal{O}(L_f N_f)$. Indeed, in Appendix \ref{app:arch_abl}, we empirically find that relaxing the constraint on the weight clip (which can be identified with $L_f$) enabled the network to fit higher frequencies, implying that the $\mathcal{O}(L_f)$ bound can be tight.

\subsection{Spectral Decay Rate of the Parameter Gradient} \label{app:graddecay}
\begin{proposition} \label{prop:graddecay}
Let $\theta$ be a generic parameter of the network function $f$. The spectral decay rate of $\nicefrac{\partial \tilde f}{\partial \theta}$ is $\mathcal{O}(k \tilde{f})$. 
\end{proposition}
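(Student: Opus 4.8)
The plan is to differentiate the explicit rational form of $\tilde f_\theta$ factor by factor and track how each piece scales in $k$ for a fixed Fourier direction $\hat\kk$. Rather than starting from the compact statement of Theorem~\ref{theorem:spectraldecay}, I would work from the fully expanded expression obtained by feeding the face-poset expansion (Eqns~\ref{eq:fullftpolytope2} and~\ref{eq:ftreludn}) into Lemma~\ref{FTRelu},
\beq
\tilde f_\theta = \sum_{n=0}^{d} \frac{1_{H_n}}{k^{n+1}} \sum_{\epsilon} W_\epsilon \sum_{T \in \mathcal T_n} 1_{S(T)} \Big(\prod_{i=0}^{n-1} \bar W_{F_i^T, F_{i+1}^T}\Big)\, \mathrm{vol}_{F_n^T}(F_n^T)\, e^{i \Phi^{(T)}_{\kk}},
\eeq
since this exposes every elementary factor through which a single scalar parameter $\theta$ (a weight or bias entry) enters. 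The strategy is then to apply the product rule and observe that exactly one factor, the phase $e^{i\Phi^{(T)}_\kk}$, can raise the power of $k$ upon differentiation, while all remaining factors are either $k$-independent or homogeneous of degree $0$ in $\kk$.

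First I would classify the $\theta$- and $k$-dependence of each building block. The matrix $W_\epsilon$ is a product of (partially zeroed) weight matrices, hence $k$-independent, so $\partial_\theta W_\epsilon$ is again $k$-independent; likewise the volumes $\mathrm{vol}_{F_n^T}(F_n^T)$ are $k$-independent. The rescaled weights $\bar W_{F,G}=k\,W_{F,G}$, with $W_{F,G}=i\,\mathrm{Proj}_F(\kk)\cdot\mathbf N_F(G)/\|\mathrm{Proj}_F(\kk)\|^2$, are homogeneous of degree $0$ in $\kk$ (numerator degree $1$, denominator degree $2$, times the explicit $k$); their $\theta$-dependence enters only through the face orientations $\mathbf N_F(G)$ and the projectors $\mathrm{Proj}_F$, so $\partial_\theta\bar W_{F,G}$ is still degree $0$ in $\kk$, i.e.\ $\Theta(1)$ as $k\to\infty$. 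Differentiating any of these leaves the leading $k$-power of each summand unchanged, giving a contribution that is $\mathcal{O}(\tilde f)\subseteq\mathcal{O}(k\tilde f)$. The sole exception is the phase: writing $\Phi^{(T)}_\kk=\kk\cdot\x_0^T(\theta)$ for a face-point $\x_0^T$ of a polytope, one has $\partial_\theta e^{i\Phi^{(T)}_\kk}=i\,\kk\cdot(\partial_\theta\x_0^T)\,e^{i\Phi^{(T)}_\kk}$, and for fixed $\hat\kk$ the prefactor $\kk\cdot\partial_\theta\x_0^T$ is $\mathcal{O}(k)$. This is the only mechanism that raises the decay rate by one exponent, producing the claimed $\mathcal{O}(k\tilde f)$.

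What remains is to dispose of the indicator factors $1_{H_n}$, $1_{S(T)}$, $1_{G_n}$, which also depend on $\theta$ through the face orientations. Here I would invoke the genericity hypothesis: for generic $\theta$ and a fixed generic direction $\hat\kk$, the vector $\kk$ avoids the measure-zero unions of subspaces $S(T),G_n,H_n$ that delimit the branches of the expansion, so each indicator is locally constant in $\theta$ and contributes nothing to $\partial_\theta\tilde f$. Combining the three cases, every surviving summand of $\partial_\theta\tilde f$ either preserves the $k$-power of the corresponding summand of $\tilde f$ or gains exactly one factor of $k$; summing over $n$, $\epsilon$ and $T$ preserves this bound, whence $\partial_\theta\tilde f=\mathcal{O}(k\tilde f)$ for each fixed $\hat\kk$.

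I expect the genuine difficulty to be rigor rather than mechanism. Three points need care: (i) justifying the interchange of $\partial_\theta$ with the \emph{distributional} Fourier transform, which should follow by dominated convergence applied to the Schwartz pairing $\langle\tilde f,\varphi\rangle=\langle f,\tilde\varphi\rangle$, since $f$ depends smoothly on $\theta$ on each region; (ii) showing that the face-points $\x_0^T(\theta)$ are locally smooth in $\theta$ with bounded $\partial_\theta\x_0^T$, which holds for generic $\theta$ because each $\x_0^T$ solves a non-degenerate linear system whose coefficients depend affinely on the weights and biases; and (iii) pinning down ``generic'' so that the non-degeneracy in (ii) and the off-boundary condition for the indicators hold simultaneously outside a measure-zero set of $(\theta,\hat\kk)$. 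The main conceptual obstacle is (iii): one must argue that the finitely many lower-dimensional bad sets (region boundaries and degenerate vertex configurations) are negligible and can be excluded uniformly, after which the differentiation argument is pure bookkeeping.
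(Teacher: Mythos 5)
Your proposal is correct and follows essentially the same route as the paper's own proof: both identify the phase factors $e^{i\Phi^{(T)}_{\kk}}$ with $\Phi^{(T)}_{\kk}=\kk\cdot\x_0^{T}(\theta)$ as the sole $\theta$-dependent ingredient whose differentiation raises the power of $k$ by one, all other factors being $\Theta(1)$ in $k$ for fixed $\hat\kk$. Your additional bookkeeping on the $\bar W_{F,G}$ factors, the indicators, and the genericity/interchange-of-limits issues is more careful than the paper's two-sentence argument, but it does not change the mechanism.
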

\begin{proof}
For a fixed $\hat \kk$, observe from Eqn~\ref{eq:ftreludn} and Eqn~\ref{eq:dndef} that the only terms dependent on $k$ are the pure powers $k^{-n - 1}$ and the phase terms $e^{i\Phi_{\kk}^{(T)}}$, where $\Phi_{\kk}^{(T)} = k \hat \kk \cdot \x_0^{q(T)}$. However, the term $\x_0^{q(T)}$ is in general a function of $\theta$, and consequently the partial derivative of $e^{i\Phi_{\kk}^{(T)}}$ w.r.t $\theta$ yields a term that is proportional to $k$. This term now dominates the asymptotic behaviour as $k \to \infty$, adding an extra power of $k$ to the total spectral decay rate of $\tilde f$.
\end{proof}
Therefore, if $f = \mathcal{O}(k^{-\Delta - 1})$ where $\Delta$ is the codimension of the highest dimensional polytope $\hat \kk$ is orthogonal to, we have that $\nicefrac{\partial f}{\partial \theta} = \mathcal{O}(k^{-\Delta})$. 

\subsection{Convergence Rate of a Network Trained on Pure-Frequency Targets}
In this section, we derive an asymptotic bound on the convergence rate under the assumption that the target function has only one frequency component. 
\begin{proposition}
Let $\lambda: [0, 1] \to \R$ be a target function sampled in its domain at $N$ uniformly spaced points. Suppose that its Fourier transform after sampling takes the form: $\tilde \lambda(k) = A_0 \delta_{k, k_0}$, where $\delta$ is the Kronecker delta. Let $f$ be a neural network trained with full-batch gradient descent with learning rate $\eta$ on the Mean Squared Error, and denote by $f_t$ the state of the network at time $t$. Let $h(\cdot, t) = f_t - \lambda$ be the residual at time $t$. We have that: 
\beq
\left|\frac{\partial \tilde h(k_0, t)}{\partial t}\right| = \mathcal{O}(k_0^{-1})
\eeq
\end{proposition}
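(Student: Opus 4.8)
The plan is to reduce the statement to the spectral decay of the parameter gradient from Proposition~\ref{prop:graddecay}, following the informal computation of Eqn~\ref{eq:residualdecay} but tracking the $k_0$-dependence carefully. Since the target $\lambda$ is held fixed throughout training, the residual and the network share the same time derivative at every frequency, so $\frac{\partial \tilde h(k_0,t)}{\partial t} = \frac{\partial \tilde f(k_0,t)}{\partial t}$; this disposes of $\lambda$ at once and lets me work with $\tilde f$ alone.

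First I would differentiate through the parameters by the chain rule, $\frac{\partial \tilde f(k_0)}{\partial t} = \frac{\partial \tilde f(k_0)}{\partial \theta}\cdot\frac{d\theta}{dt}$, and substitute the gradient-flow law $\frac{d\theta}{dt} = -\eta\,\frac{\partial \mathcal{L}}{\partial\theta}$. The key structural observation is that $\frac{d\theta}{dt}$ carries no dependence on the probe frequency $k_0$: the loss $\mathcal{L}$ is a single scalar, a finite sum over the $N$ sample points, so its parameter gradient is one fixed vector at each time $t$. Even after rewriting $\mathcal{L}$ in the Fourier domain through discrete Parseval, where it becomes a sum over a frequency index, that index is a bound summation variable rather than the frequency $k_0$ at which the rate of change is being probed (this is exactly the content of the footnote to Eqn~\ref{eq:residualdecay}).

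Next I would invoke the decay estimates. In the one-dimensional setting of the proposition the linear regions are intervals whose only boundary faces are zero-dimensional vertices, so every nonzero $k$ lies in the codimension-one regime and Theorem~\ref{theorem:spectraldecay} gives $\tilde f(k) = \mathcal{O}(k^{-2})$, i.e. $\Delta = 1$. Proposition~\ref{prop:graddecay} then yields $\frac{\partial \tilde f(k_0)}{\partial\theta} = \mathcal{O}(k_0\,\tilde f(k_0)) = \mathcal{O}(k_0^{-1})$. Combined with the Cauchy--Schwarz estimate $\left|\frac{\partial \tilde f(k_0)}{\partial t}\right| \le \left\|\frac{\partial \tilde f(k_0)}{\partial\theta}\right\|\,\left\|\frac{d\theta}{dt}\right\|$, this gives the claim as soon as the second factor is shown to be $\mathcal{O}(1)$ in $k_0$.

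The main obstacle is precisely this last certification: that $\left\|\frac{d\theta}{dt}\right\| = \eta\left\|\frac{\partial\mathcal{L}}{\partial\theta}\right\|$ stays bounded as $k_0\to\infty$, which is delicate because the residual entering $\mathcal{L}$ depends on the target frequency. I would control it by using that the pure-frequency target has fixed amplitude $A_0$ and is uniformly bounded independently of $k_0$; hence at any fixed network state the residual values $h(\x_i)$ and the sample-wise gradients $\frac{\partial f(\x_i)}{\partial\theta}$ are bounded uniformly in $k_0$, forcing $\frac{\partial\mathcal{L}}{\partial\theta} = \mathcal{O}(1)$. Multiplying the $\mathcal{O}(k_0^{-1})$ and $\mathcal{O}(1)$ factors then delivers $\left|\frac{\partial \tilde h(k_0,t)}{\partial t}\right| = \mathcal{O}(k_0^{-1})$.
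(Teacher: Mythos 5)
Your proposal is correct and follows essentially the same route as the paper's proof: chain rule through the parameters, Proposition~\ref{prop:graddecay} (with $\Delta = 1$ in one dimension, so $\partial \tilde f(k_0)/\partial\theta = \mathcal{O}(k_0^{-1})$) for the first factor, and a $k_0$-uniform bound on the magnitude of the loss gradient for the second. The only cosmetic difference is that you certify the latter bound directly in the sample domain from the boundedness of the pure-frequency target, whereas the paper passes to the Fourier domain via Parseval and splits off the single Kronecker-delta term $A_0\,\partial\tilde f(k_0)/\partial\theta$ (itself $\mathcal{O}(k_0^{-1})$) from a $k_0$-independent remainder; both give the same $\mathcal{O}(1)$ conclusion.
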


\begin{proof}
Consider that: 
\begin{align}
\left| \frac{\partial \tilde h(k_0)}{\partial t} \right| &= \left| \frac{\partial \tilde f(k_0)}{\partial \theta} \right| \left| \frac{\partial \theta}{\partial t}\right| \\
&= \left|\eta \frac{\partial \tilde f}{\partial \theta}\right| \left|\frac{\partial \mathcal{L}[\tilde f, \tilde \lambda]}{\partial \theta}\right|
\end{align}
where $\mathcal{L}$ is the sampled MSE loss and the first term is $\mathcal{O}(k_0^{-1})$ as can be seen from Proposition~\ref{prop:graddecay}. With Parceval's Theorem, we obtain: 
\begin{align}
\mathcal{L}[f, \lambda] &= \sum_{x = 0}^{N - 1} | f(x) - \lambda(x) |^2 = \sum_{k = -\nicefrac{N}{2}}^{\nicefrac{N}{2} - 1} | \tilde f(k) - \tilde \lambda(k) |^2 \nonumber \\
&= \mathcal{L}[\tilde f, \tilde \lambda]
\end{align}
For the magnitude of parameter gradient, we obtain: 
\begin{align}
\left|\frac{\partial \mathcal{L}[\tilde f, \tilde \lambda]}{\partial \theta}\right| &= 2 \left| \sum_{k = -\nicefrac{N}{2}}^{\nicefrac{N}{2} - 1} \text{Re}[\tilde f(k) - \tilde \lambda(k)] \frac{\partial \tilde f(k)}{\partial \theta} \right| \nonumber \\
&\le 2 \sum_{k = -\nicefrac{N}{2}}^{\nicefrac{N}{2} - 1} |\tilde f(k) - \tilde \lambda(k)| \left| \frac{\partial \tilde f(k)}{\partial \theta} \right| \nonumber \\
&\le 2\left| A_0 \frac{\partial \tilde f(k_0)}{\partial \theta} \right| + 2 \sum_{k = -\nicefrac{N}{2}}^{\nicefrac{N}{2} - 1} \left|\tilde f(k) \frac{\partial \tilde f(k)}{\partial \theta} \right|
\end{align}
where in the last line we used that $\tilde \lambda$ is a Kronecker-$\delta$ in the Fourier domain. Now, the second summand does not depend on $k_0$, but the first summand is again $\mathcal{O}(k_0^{-1})$.  
\end{proof}

\subsection{Proof of the Lipschtiz bound}
\label{App:Lipschitzbound}

\begin{proposition} \label{lemma:lipschitz}
The Lipschitz constant $L_f$ of the ReLU network $f$ is bound as follows (for all $\epsilon$): 
\beq
\|W_{\epsilon}\| \le L_f \le \prod_{k = 1}^{L+1} \|W^{(k)}\| \le \|\theta\|_{\infty}^{L+1} \sqrt{d} \prod_{k=1}^{L} d_k 
\eeq
\end{proposition}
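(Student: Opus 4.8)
The plan is to establish the three inequalities of the chain separately: the lower bound $\|W_\epsilon\| \le L_f$, the middle bound $L_f \le \prod_{k=1}^{L+1}\|W^{(k)}\|$, and the rightmost bound $\prod_{k=1}^{L+1}\|W^{(k)}\| \le \|\theta\|_\infty^{L+1}\sqrt{d}\prod_{k=1}^{L} d_k$. Throughout, $\|\cdot\|$ denotes the operator (spectral) norm, so that for the $1\times d$ matrix $W_\epsilon$ the quantity $\|W_\epsilon\|$ is just the Euclidean length of the row, and $L_f$ is the global Lipschitz constant of $f$ with respect to the Euclidean norm.

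For the middle inequality I would avoid the piecewise decomposition and argue directly from the compositional form (\ref{reluDNN2}). Each affine map $T^{(k)}(\x)=W^{(k)}\x+\mathbf{b}^{(k)}$ is Lipschitz with constant $\|W^{(k)}\|$ (the additive bias does not change the Lipschitz constant), and the ReLU nonlinearity $\sigma$ is $1$-Lipschitz in the Euclidean norm, since $|\max(0,u)-\max(0,v)|\le|u-v|$ coordinatewise gives $\|\sigma(\uu)-\sigma(\mathbf{v})\|^2=\sum_i|\max(0,u_i)-\max(0,v_i)|^2\le\sum_i|u_i-v_i|^2$. Because the Lipschitz constant of a composition is at most the product of the individual Lipschitz constants, composing $L+1$ affine maps with $L$ interleaved ReLU layers yields $L_f \le \prod_{k=1}^{L+1}\|W^{(k)}\|\cdot 1^{L}=\prod_{k=1}^{L+1}\|W^{(k)}\|$. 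As a by-product this shows $f$ is globally Lipschitz, so $L_f$ is finite and well defined.

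For the lower bound I would use the CPWL structure from Appendix~\ref{app:cpwlrelu}. Fix an activation pattern $\epsilon$ whose region $P_\epsilon$ is full-dimensional (has nonempty interior); on it $f(\x)=W_\epsilon\x+\mathbf{b}_\epsilon$ is affine. Choosing an interior point $\x_0$ and the unit direction $\mathbf{v}=W_\epsilon^{T}/\|W_\epsilon\|$, the segment $\x_0+t\mathbf{v}$ stays in $P_\epsilon$ for small $t>0$ and satisfies $|f(\x_0+t\mathbf{v})-f(\x_0)|=t\,W_\epsilon W_\epsilon^{T}/\|W_\epsilon\|=t\,\|W_\epsilon\|$; hence $L_f\ge\|W_\epsilon\|$. (Degenerate patterns with empty interior impose no constraint and may be ignored.) Together with the middle bound this sandwiches $L_f$ between $\max_\epsilon\|W_\epsilon\|$ and $\prod_k\|W^{(k)}\|$, consistent with the identification $L_f=\max_\epsilon\|W_\epsilon\|$ used in the main text.

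The rightmost inequality is then pure bookkeeping. Each $W^{(k)}$ has shape $d_k\times d_{k-1}$ with entries bounded in modulus by $\|\theta\|_\infty$, so $\|W^{(k)}\|\le\|W^{(k)}\|_F\le\sqrt{d_k d_{k-1}}\,\|\theta\|_\infty$. Multiplying over $k=1,\dots,L+1$ and using $d_0=d$ and $d_{L+1}=1$, the dimension factors telescope: $\prod_{k=1}^{L+1}\sqrt{d_k d_{k-1}}=\sqrt{\bigl(\prod_{k=1}^{L+1}d_k\bigr)\bigl(\prod_{k=0}^{L}d_k\bigr)}=\sqrt{d\,\bigl(\prod_{k=1}^{L}d_k\bigr)^2}=\sqrt{d}\prod_{k=1}^{L}d_k$, which produces exactly the claimed factor together with the $\|\theta\|_\infty^{L+1}$. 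The only step requiring genuine care is the lower bound: one must restrict to activation patterns that actually index a full-dimensional linear region, so that the affine slope $W_\epsilon$ is realized on an open set and the directional-derivative argument goes through; the remaining ingredients (submultiplicativity of the operator norm, the $1$-Lipschitz property of $\sigma$, and the spectral-to-Frobenius estimate) are standard.
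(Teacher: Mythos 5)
Your proposal is correct and follows essentially the same route as the paper's proof: submultiplicativity of Lipschitz constants over the composition with the $1$-Lipschitz ReLU for the middle inequality, an entrywise (Frobenius-type) bound $\|W^{(k)}\|\le\sqrt{d_k d_{k-1}}\,\|\theta\|_\infty$ for the rightmost one, and telescoping of the dimension factors using $d_0=d$, $d_{L+1}=1$. Your treatment of the lower bound is in fact slightly more careful than the paper's, which simply asserts $L_f=\max_\epsilon\|W_\epsilon\|$, whereas you justify $\|W_\epsilon\|\le L_f$ via a directional-derivative argument on a full-dimensional region.
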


\begin{proof} The first equality is simply the fact that $L_f = \max_{\epsilon} \|W_\epsilon\|$, and the second inequality follows trivially from the parameterization of a ReLU network as a chain of function compositions\footnote{Recall that the Lipschitz constant of a composition of two or more functions is the product of their respective Lipschtiz constants.}, together with the fact that the Lipschitz constant of the ReLU function is 1 (cf. \cite{miyato2018spectral}, equation 7). To see the third inequality, consider the definition of the spectral norm of a $I \times J$ matrix $W$: 
\beq
\|W\| = \max_{\|\mathbf{h}\| = 1} \|W\mathbf{h}\|
\eeq
Now, $\|W\mathbf{h}\| = \sqrt{\sum_i |\mathbf{w}_i \cdot \mathbf{h}|}$, where $\mathbf{w}_i$ is the $i$-th row of the weight matrix $W$ and $i = 1, ..., I$. Further, if $\|\mathbf{h}\| = 1$, we have $|\mathbf{w}_i \cdot \mathbf{h}| \le \|\mathbf{w}_i\| \|\mathbf{h}\| = \|\mathbf{w}_i\|$. Since $\|\mathbf{w}_i\| = \sqrt{\sum_j |w_{ij}|}$ (with $j = 1, ..., J$) and $|w_{ij}| \le \|\theta\|_{\infty}$, we find that $\|\mathbf{w}_i\| \le \sqrt{J} \|\theta\|_\infty$. Consequently, $\sqrt{\sum_i |\mathbf{w}_i \cdot \mathbf{h}|} \le \sqrt{IJ} \|\theta\|_{\infty}$ and we obtain: 

\beq
\|W\| \le \sqrt{IJ}\|\theta\|_{\infty}
\eeq

Now for $W = W^{(k)}$, we have $I = d_{k - 1}$ and $J = d_k$. In the product over $k$, every $d_k$ except the first and the last occur in pairs, which cancels the square root. For $k=1$, $d_{k - 1} = d$ (for the $d$ input neurons) and for $k = L+1$, $d_k = 1$ (for a single output neuron). The final inequality now follows. 
\end{proof} 

\subsection{The Fourier Transform of a Function Composition} \label{app:ftfuncomp}

Consider Equation \ref{eqn:ftfuncomp}. The general idea is to investigate the behaviour of $P_{\gamma}(\mathbf{l}, \mathbf{k})$ for large frequencies $\mathbf{l}$ on manifold but smaller frequencies $\mathbf{k}$ in the input domain. In particular, we are interested in the regime where the stationary phase approximation is applicable to $P_{\gamma}$, i.e. when $l^2 + k^2 \to \infty$ (cf. section 3.2. of \cite{bergnerspectral}). In this regime, the integrand in $P_\gamma(\mathbf{k}, \mathbf{l})$ oscillates fast enough such that the only constructive contribution originates from where the phase term $u(\mathbf{z}) = \mathbf{k} \cdot \gamma(\mathbf{z}) - \mathbf{l} \cdot \mathbf{z}$ does not change with changing $\mathbf{z}$. This yields the condition that $\nabla_{\mathbf{z}} u(\mathbf{z}) = 0$, which translates to the condition (with Einstein summation convention implied and $\partial_{\nu} = \nicefrac{\partial}{\partial x_{\nu}}$):
\beq \label{eqn:k_nu}
l_\nu = k_\mu \partial_\nu \gamma_\mu(\mathbf{z})
\eeq
Now, we impose periodic boundary conditions\footnote{This is possible whenever $\gamma$ is defined on a bounded domain, e.g. on $[0, 1]^m$.} on the components of $\gamma$, and without loss of generality we let the period be $2 \pi$. Further, we require that the manifold be contained in a box\footnote{This is equivalent to assuming that the data lies in a bounded set.} of some size in $\mathbb{R}^d$. The $\mu$-th component $\gamma_\mu$ can now be expressed as a Fourier series: 
\begin{align} \label{eqn:partial_gamma_mu}
\gamma_\mu(\mathbf{z}) &= \sum_{\mathbf{p} \in \mathbb{Z}^m} \tilde{\gamma}_\mu [\mathbf{p}] e^{-i p_\rho z_\rho} \nonumber\\
\partial_\nu \gamma_\mu(\mathbf{z}) &= \sum_{\mathbf{p} \in \mathbb{Z}^m} -i p_\nu \tilde{\gamma}_\mu [\mathbf{p}] e^{-i p_\rho z_\rho}
\end{align}
Equation \ref{eqn:partial_gamma_mu} can be substituted in equation \ref{eqn:k_nu} to obtain: 
\beq
l \hat{l}_\nu = -i k \sum_{\mathbf{p} \in \mathbb{Z}^m} p_\nu \hat{k}_\mu \tilde{\gamma}_\mu [\mathbf{p}] e^{-ip_\rho z_\rho}
\eeq
where we have split $k_\mu$ and $l_\nu$ in to their magnitudes $k$ and $l$ and directions $\hat{k}_\nu$ and $\hat{l}_\mu$ (respectively). We are now interested in the conditions on $\gamma$ under which the RHS can be large in magnitude, even when $k$ is fixed. Recall that $\gamma$ is constrained to a box -- consequently, we can not arbitrarily scale up $\tilde{\gamma}_{\mu}$. However, if $\tilde{\gamma}_{\mu}[\mathbf{p}]$ decays slowly enough with increasing $\mathbf{p}$, the RHS can be made arbitrarily large (for certain conditions on $\mathbf{z}$, $\hat{l}_{\mu}$ and $\hat{k}_{\nu}$).

\section{Volume of \emph{High-Frequency Parameters} in Parameter Space} \label{app:volparamspace}
For a given neural network, we now show that the volume of the parameter space containing parameters that contribute $\epsilon$-non-negligibly to frequency components of magnitude $k'$ above a certain cut-off $k$ decays with increasing $k$. For notational simplicity and without loss of generality, we absorb the direction $\hat{\mathbf{k}}$ of $\mathbf{k}$ in the respective mappings and only deal with the magnitude $k$. 

\begin{definition} \label{definition:frequencyparams}
Given a ReLU network $f_{\theta}$ of fixed depth, width and weight clip $K$ with parameter vector $\theta$, an $\epsilon > 0$ and $\Theta = B^{\infty}_{K}(0)$ a $L^{\infty}$ ball around $0$, we define:
$$
\Xi_{\epsilon}(k) = \{\theta \in \Theta | \exists k' > k, |\tilde{f}_{\theta}(k')| > \epsilon \}
$$
as the set of all parameters vectors $\theta \in \Xi_{\epsilon}(k)$ that contribute more than an $\epsilon$ in expressing one or more frequencies $k'$ above a \underline{cut-off frequency} $k$.
\end{definition}

\begin{remark} \label{remark:subsets}
If $k_2 \ge k_1$, we have $\Xi_{\epsilon}(k_2) \subseteq \Xi_{\epsilon}(k_1)$ and consequently $\text{vol}(\Xi_{\epsilon}(k_2)) \le \text{vol}(\Xi_{\epsilon}(k_1))$, where $\text{vol}$ is the Lebesgue measure. 
\end{remark}

\begin{lemma}
Let $1_{k}^{\epsilon}(\theta)$ be the indicator function on $\Xi_{\epsilon}(k)$. Then:
$$
\exists \, \kappa > 0:  \forall k \ge \kappa, 1_{k}^{\epsilon}(\theta) = 0
$$
\end{lemma}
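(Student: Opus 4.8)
The plan is to prove the stronger, \emph{$\theta$-uniform} statement that $\Xi_{\epsilon}(k) = \emptyset$ for all sufficiently large $k$: once the cut-off exceeds some threshold $\kappa$, \emph{no} parameter vector in the ball $\Theta = B^{\infty}_{K}(0)$ can place more than $\epsilon$ of spectral magnitude at any frequency above it. Since $1_k^{\epsilon}$ is by definition the indicator of $\Xi_{\epsilon}(k)$, emptiness of this set is exactly the assertion $1_k^{\epsilon} \equiv 0$. The entire argument amounts to upgrading the pointwise spectral decay of Theorem~\ref{theorem:spectraldecay} to a bound that is uniform over the compact parameter domain $\Theta$.

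First I would invoke Theorem~\ref{theorem:spectraldecay} to write $|\tilde f_{\theta}(\kk)| \le \sum_{n=0}^{d} |C_n(\theta,\kk)|\,1_{H^{\theta}_n}(\kk)/k^{n+1}$ with $k = \|\kk\|$, and control the right-hand side uniformly in $\theta$ via two observations. (i) The numerator obeys $|C_n(\theta,\kk)| = \mathcal{O}(N_f L_f)$ (cf.\ Appendix~\ref{app:moarspectraldecay}); for a fixed architecture the number of linear regions $N_f$ is bounded by a purely combinatorial constant $N_{\max}$ depending only on depth and widths (the region-count bounds of \citet{montufar2014number, raghu2016expressive}), while Proposition~\ref{lemma:lipschitz} together with $\|\theta\|_{\infty} \le K$ on $\Theta$ gives $L_f \le K^{L+1}\sqrt{d}\prod_{k=1}^{L} d_k =: L_{\max}$. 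Hence there is a constant $M$, independent of $\theta$ and of $\kk$, with $|C_n(\theta,\kk)| \le M$. (ii) The $n=0$ term has support $H^{\theta}_0 = \{0\}$ (a $0$-codimensional face is the full-dimensional polytope, whose orthogonal complement is $\{0\}$), so it never contributes for $\kk \ne 0$ and the slowest surviving rate is $k^{-2}$. Combining, for every $\theta \in \Theta$ and every $\kk$ with $k = \|\kk\| \ge 1$,
\begin{equation}
|\tilde f_{\theta}(\kk)| \le \sum_{n=1}^{d} \frac{M}{k^{n+1}} \le \frac{dM}{k^2}.
\end{equation}

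With this uniform envelope the lemma is immediate: set $\kappa = \max\{1, \sqrt{dM/\epsilon}\}$. For any $k \ge \kappa$, any $\theta \in \Theta$, and any frequency of magnitude $k' > k$ we obtain $|\tilde f_{\theta}(k')| \le dM/(k')^2 < dM/\kappa^2 \le \epsilon$. Thus the defining condition of $\Xi_{\epsilon}(k)$ — existence of some $k' > k$ with $|\tilde f_{\theta}(k')| > \epsilon$ — fails for every $\theta$, so $\Xi_{\epsilon}(k) = \emptyset$ and $1_k^{\epsilon} \equiv 0$, which is consistent with the nesting in Remark~\ref{remark:subsets} (the sets shrink and eventually vanish).

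The main obstacle is step (i): making the numerator bound genuinely uniform over $\Theta$. The Lipschitz and region-count factors are routine, but $C_n$ also carries geometric data — products of $\mathcal{O}(1)$ facet-weight terms and the Hausdorff volumes of the terminal faces in the Stokes recursion of Appendix~\ref{app:ftpolytope}. One must argue these stay bounded as $\theta$ ranges over the compact ball $\Theta$, which is delicate precisely because ReLU linear regions may be unbounded, seemingly yielding infinite face volumes. The resolution is that in the distributional treatment an unbounded face contributes only on the measure-zero set of $\kk$ orthogonal to its unbounded directions, so on the region of $\kk$-space where $\tilde 1_{P_{\epsilon}}$ is an honest function only bounded cross-sections enter; their volumes, being continuous in $\theta$ on the compact domain, admit a uniform bound absorbed into $M$. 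I would isolate this as the single point needing care and otherwise lean on the $\mathcal{O}(N_f L_f)$ estimate already established.
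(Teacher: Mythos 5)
Your proof is correct and rests on the same mechanism as the paper's: invoke Theorem~\ref{theorem:spectraldecay}, note that for $\kk \neq 0$ the slowest surviving decay is $k^{-2}$ (the paper phrases this as the worst case $\Delta = 1$), and choose $\kappa \sim \sqrt{M/\epsilon}$ so the envelope falls below $\epsilon$ past the cut-off, concluding via the nesting of Remark~\ref{remark:subsets}. Where you genuinely diverge is the quantifier structure. The paper fixes $\theta$ and extracts $M$ from the pointwise big-$\mathcal{O}$ of Theorem~\ref{theorem:spectraldecay}, so its $\kappa$ implicitly depends on $\theta$; you instead make the numerator bound uniform over the compact ball $\Theta$ (bounding $N_f$ by an architecture-only constant and $L_f$ via Proposition~\ref{lemma:lipschitz} with $\|\theta\|_{\infty} \le K$) and prove the stronger claim $\Xi_{\epsilon}(k) = \emptyset$. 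This buys something concrete: the volume proposition that follows integrates $1_{k}^{\epsilon}(\theta) \le |\tilde{f}_{\theta}(k)|$ over all of $\Theta$ at a single $k$, which tacitly requires exactly the $\theta$-uniform $\kappa$ you construct and the paper's pointwise argument does not supply. The one soft spot is the step you yourself flag: the constants $D_n$ in the recursion of Appendix~\ref{app:ftpolytope} carry Hausdorff volumes of terminal faces, which are infinite for unbounded regions and are not obviously continuous in $\theta$ (the combinatorics of the region arrangement jumps as $\theta$ varies), so compactness of $\Theta$ alone does not close the uniform bound; but the paper's own $\mathcal{O}(N_f L_f)$ estimate glosses over the identical issue, so your argument is no less rigorous than the original on this point while proving a strictly stronger conclusion.
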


\begin{proof}
From theorem \ref{theorem:spectraldecay}, we know that\footnote{Note from Theorem~\ref{theorem:spectraldecay} that $\Delta$ implicitly depends only on the unit vector $\hat \kk$.} $|\tilde{f}_{\theta}(k)| = \mathcal{O}(k^{-\Delta - 1})$ for an integer $1 \le \Delta \le d$. In the worse case where $\Delta = 1$, we have that $\exists M < \infty: |\tilde{f}_{\theta}(k)| < \frac{M}{k^2}$. Now, simply select a $\kappa > \sqrt{\frac{M}{\epsilon}}$ such that $\frac{M}{\kappa^2} < \epsilon$. This yields that $|\tilde{f}_{\theta}(\kappa)| < \frac{M}{\kappa^2} < \epsilon$, and given that $\frac{M}{\kappa^2} \le \frac{M}{k^2} \,\forall \, k \ge \kappa$, we find $|\tilde{f}_{\theta}(k)| < \epsilon \,\forall\, k \ge \kappa$. Now by definition \ref{definition:frequencyparams}, $\theta \not\in \Xi_{\epsilon}(\kappa)$, and since $\Xi_{\epsilon}(k) \subseteq \Xi_{\epsilon}(\kappa)$ (see remark \ref{remark:subsets}), we have $\theta \not\in \Xi_{\epsilon}(k)$, implying $1_{k}^{\epsilon}(\theta) = 0 \,\forall\, k \ge \kappa$.
\end{proof}

\begin{remark} \label{remark:indicatorbound}
We have $1_{k}^{\epsilon}(\theta) \le |\tilde{f}_{\theta}(k)|$ for large enough $k$ (i.e. for $k \ge \kappa$), since $|\tilde{f}_{\theta}(k)| \ge 0$.
\end{remark}
\setcounter{proposition}{0}
\begin{proposition}
The relative volume of $\Xi_{\epsilon}(k)$ w.r.t. $\Theta$ is $\mathcal{O}(k^{-\Delta - 1})$ where $1 \le \Delta \le d$. 
\end{proposition}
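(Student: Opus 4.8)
The plan is to convert the relative volume into an integral of the indicator $1_k^\epsilon$ over the parameter ball and then dominate that integral by the spectral magnitude, for which Theorem~\ref{theorem:spectraldecay} already supplies the decay rate. By Definition~\ref{definition:frequencyparams} the relative volume is
\[
\frac{\text{vol}(\Xi_\epsilon(k))}{\text{vol}(\Theta)} = \frac{1}{\text{vol}(\Theta)}\int_\Theta 1_k^\epsilon(\theta)\,d\theta ,
\]
so the whole task reduces to estimating $\int_\Theta 1_k^\epsilon(\theta)\,d\theta$ as $k\to\infty$. Remark~\ref{remark:subsets} already guarantees this quantity is monotone non-increasing in $k$, which is a consistency check but not yet a rate.

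Next I would replace the set indicator by the (explicitly controlled) spectrum. Characterizing membership as $\theta\in\Xi_\epsilon(k)\iff \sup_{k'>k}|\tilde f_\theta(k')|>\epsilon$, Markov's inequality gives
\[
\text{vol}(\Xi_\epsilon(k)) \le \frac{1}{\epsilon}\int_\Theta \sup_{k'>k}|\tilde f_\theta(k')|\,d\theta ,
\]
which is the rigorous version of the heuristic $1_k^\epsilon(\theta)\le|\tilde f_\theta(k)|$ recorded in Remark~\ref{remark:indicatorbound}. The purpose of this step is to trade a hard-to-integrate indicator for the smooth quantity $|\tilde f_\theta|$, whose $k$-dependence is known.

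The crux is to invoke Theorem~\ref{theorem:spectraldecay} \emph{uniformly} in $\theta$. For each fixed $\theta$ the theorem yields $|\tilde f_\theta(k')|\le C(\theta)\,(k')^{-\Delta-1}$; since this bound decreases in $k'$, the supremum over $k'>k$ is at most $C(\theta)\,k^{-\Delta-1}$. To carry the estimate under the integral I need $C(\theta)$ to be $\theta$-independent. Here I would use that, by Appendix~\ref{app:moarspectraldecay}, the numerator satisfies $C_n=\mathcal{O}(L_f N_f)$, and on $\Theta=B^\infty_K(0)$ the weight clip $K$ bounds $L_f$ via the Lipschitz estimate of Proposition~\ref{lemma:lipschitz}, while the fixed architecture fixes $N_f$. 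Hence there is a single $M<\infty$ with $\sup_{k'>k}|\tilde f_\theta(k')|\le M k^{-\Delta-1}$ for every $\theta\in\Theta$. Pulling $M$ out of the integral and dividing by $\text{vol}(\Theta)$ then yields $\text{vol}(\Xi_\epsilon(k))/\text{vol}(\Theta)\le (M/\epsilon)\,k^{-\Delta-1}=\mathcal{O}(k^{-\Delta-1})$.

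I expect the main obstacle to be exactly this uniformity, compounded by the fact that the exponent $\Delta$ in Theorem~\ref{theorem:spectraldecay} is itself a function of $\theta$: it is the codimension of the largest face of some $P_\epsilon$ to which the (fixed) direction $\hat\kk$ is orthogonal, and those polytopes move as $\theta$ varies. Reading the statement correctly therefore means interpreting $\Delta$ as the relevant worst-case (or generic) exponent and verifying that the slow-decay directions occupy a controlled subset of $\Theta$. The compactness of $\Theta$ together with the weight-clip bound are precisely what keep the decay constant $M$ finite and uniform, so that the pointwise $\mathcal{O}(k^{-\Delta-1})$ of Theorem~\ref{theorem:spectraldecay} survives integration over the parameter ball.
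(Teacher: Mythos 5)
Your proposal is correct and follows essentially the same route as the paper's own proof: write $\text{vol}(\Xi_\epsilon(k))$ as $\int_\Theta 1_k^\epsilon(\theta)\,d\theta$, dominate the indicator by the spectral magnitude, and invoke Theorem~\ref{theorem:spectraldecay} under the integral. Your Markov-inequality step $1_k^\epsilon(\theta)\le \frac{1}{\epsilon}\sup_{k'>k}|\tilde f_\theta(k')|$ is in fact a cleaner and more uniform substitute for the paper's Remark~\ref{remark:indicatorbound} (which only bounds the indicator by $|\tilde f_\theta(k)|$ beyond a $\theta$-dependent threshold $\kappa$), and your closing concern about the $\theta$-uniformity of the constant and of the exponent $\Delta$ is a genuine subtlety that the paper's proof leaves implicit.
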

\begin{proof}
The volume is given by the integral over the indicator function, i.e. 
$$
\text{vol}(\Xi_{\epsilon}(k)) = \int_{\theta \in \Theta} 1_{k}^{\epsilon}(\theta) d\theta
$$

For a large enough $k$, we have from remark \ref{remark:indicatorbound}, the monotonicity of the Lebesgue integral and theorem \ref{theorem:spectraldecay} that: 

\begin{align*}
\text{vol}(\Xi_{\epsilon}(k)) &= \int_{\theta \in \Theta} 1_{k}^{\epsilon}(\theta) d\theta \\
&\le \int_{\theta \in \Theta} |\tilde{f}_{\theta}(k)| d\theta = \mathcal{O}(k^{-\Delta - 1}) \text{vol}(\Theta) \\
&\implies \frac{\text{vol}(\Xi_{\epsilon}(k))}{\text{vol}(\Theta)} = \mathcal{O}(k^{-\Delta - 1})     
\end{align*}
\end{proof}

\section{Kernel Machines and KNNs}
\label{app:kmknn}
In this section, in light of our findings, we want to compare DNNs with K-nearest neighbor (k-NN) classifier and kernel machines which are also popular learning algorithms, but are, in contrast to DNNs, better understood theoretically.

\subsection{Kernel Machines vs DNNs}
Given that we study why DNNs are biased towards learning smooth functions, we note that kernel machines (KM) are also highly Lipschitz smooth (Eg.~for Gaussian kernels all derivatives are bounded). However there are crutial differences between the two. While kernel machines can approximate any target function in principal \citep{hammer2003note}, the number of Gaussian kernels needed scales linearly with the number of sign changes in the target function \citep{bengio2009learning}. \citet{ma2017diving} have further shown that for smooth kernels, a target function cannot be approximated within $\epsilon$ precision in any polynomial of $1/\epsilon$ steps by gradient descent. 

Deep networks on the other hand are also capable of approximating any target function (as shown by the universal approximation theorems \citet{hornik1989multilayer, cybenko1989approximation}), but they are also parameter efficient in contrast to KM. For instance, we have seen that deep ReLU networks separate the input space into number of linear regions that grow polynomially in width of layers and exponentially in the depth of the network \citep{montufar2014number, raghu2016expressive}. A similar result on the exponentially growing expressive power of networks in terms of their depth is also shown in \citep{ganguli2016}. In this paper we have further shown that DNNs are inherently biased towards lower frequency (smooth) functions over a finite parameter space. This suggests that DNNs strike a good balance between function smoothness and expressibility/parameter-efficiency compared with KM.

\subsection{K-NN Classifier vs. DNN classifier}
$K$-nearest neighbor ($K$NN) also has a historical importance as a classification algorithm due to its simplicity. It has been shown to be a consistent approximator \cite{devroye1996consistency}, i.e., asymptotically its empirical risk goes to zero as $K \rightarrow \infty$ and $K/N \rightarrow 0$, where $N$ is the number of training samples. However, because it is a memory based algorithm, it is prohibitively slow for large datasets. Since the smoothness of a $K$NN prediction function is not well studied, we compare the smoothness between $K$NN and DNN. For various values of $K$, we train a $K$NN classifier on a $k  = 150$ frequency signal (which is binarized) defined on the $L = 20$ manifold (see section \ref{sec:notallmfds}), and extract probability predictions on a box interval in $\mathbb{R}^2$. On this interval, we evaluate the 2D FFT and integrate out the angular components (where the angle is parameterized by $\varphi$) to obtain $\zeta(k)$:  
\beq \label{radialfrq}
\zeta(k) = \frac{d}{dk} \int_{0}^{k} dk' k' \int_{0}^{2\pi} d\varphi |\tilde{f}(k', \varphi)|
\eeq
Finally, we plot $\zeta(k)$ for various $K$ in figure \ref{fig:knn}. Furthermore, we train a DNN on the very same dataset and overlay the radial spectrum of the resulting probability map on the same plot. We find that while DNN's are as expressive as a $K = 1$ KNN classifier at lower (radial) frequencies, the frequency spectrum of DNNs decay faster than KNN classifier for all values of $K$ considered, indicating that the DNN is smoother than the $K$NNs considered. We also repeat the experiment corresponding to Fig. \ref{fig:lvk_acc_table} with KNNs (see Fig. \ref{fig:lvk_acc_tables_knn}) for various $K$'s, to find that unlike DNNs, KNNs do not necessarily perform better for larger $L$'s, suggesting that KNNs do not exploit the geometry of the manifold like DNNs do.

\begin{figure*}
\begin{subfigure}[t]{0.45\textwidth}
\centering
\includegraphics[width=0.9\textwidth]{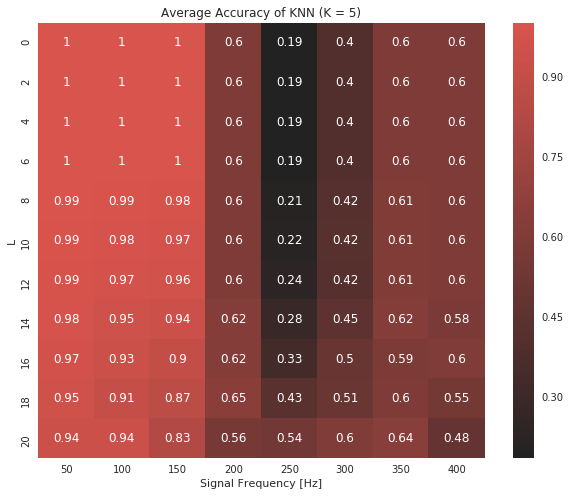}
\caption{\small $K = 5$.}
\end{subfigure}
\quad
\begin{subfigure}[t]{0.45\textwidth}
\centering
\includegraphics[width=0.9\textwidth]{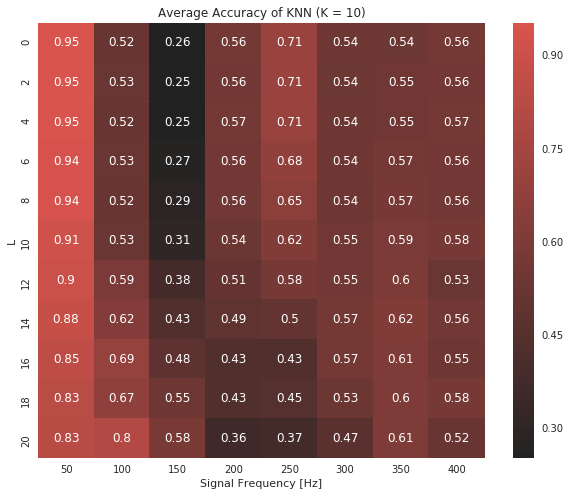}
\caption{\small $K = 10$.}
\end{subfigure}

\begin{subfigure}[t]{0.45\textwidth}
\centering
\includegraphics[width=0.9\textwidth]{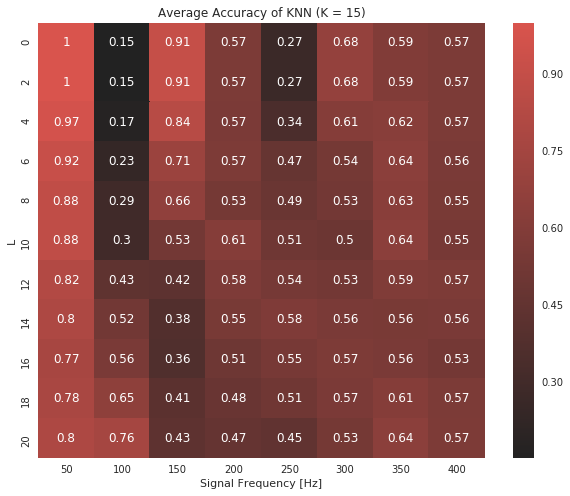}
\caption{\small $K = 15$.}
\end{subfigure}
\quad
\begin{subfigure}[t]{0.45\textwidth}
\centering
\includegraphics[width=0.9\textwidth]{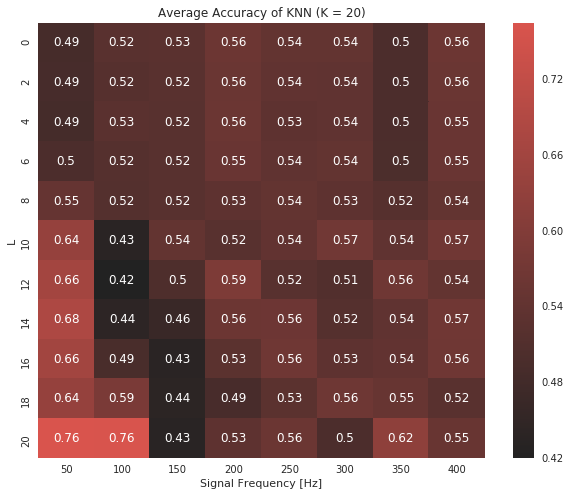}
\caption{\small $K = 20$.}
\end{subfigure}
\label{fig:lvk_acc_tables_knn}

\vspace{0.2cm}

\begin{subfigure}[t]{0.9\textwidth}
\centering
\includegraphics[width=0.7\textwidth,trim=0.in 0.in 0.in 0.in,clip]{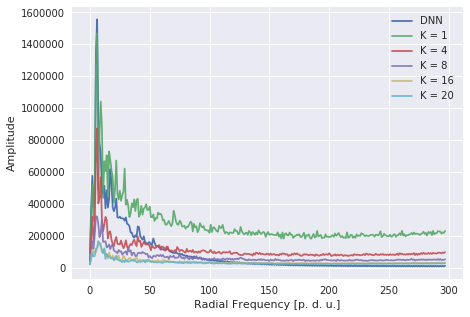}
\caption{\small Frequency spectrum \label{fig:knn}}
\end{subfigure}
\caption{(a,b,c,d): Heatmaps of training accuracies ($L$-vs-$k$) of KNNs for various $K$. When comparing with figure \ref{fig:lvk_acc_table}, note that the y-axis is flipped. (e): The frequency spectrum of $K$NNs with different values of $K$, and a DNN. The DNN learns a smoother function compared with the $K$NNs considered since the spectrum of the DNN decays faster compared with $K$NNs. \label{fig:lvk_acc_tables_knn}}
\end{figure*}

\end{appendix}

\end{document}